\documentclass{article}

\usepackage[preprint]{neurips_2026}

\usepackage[utf8]{inputenc} 
\usepackage[T1]{fontenc}    
\usepackage[backref=page]{hyperref}       
\usepackage{url}            
\usepackage{booktabs}       
\usepackage{amsfonts}       
\usepackage{nicefrac}       
\usepackage{microtype}      
\usepackage{xcolor}         
\usepackage{enumitem}

\usepackage{amsmath, amsthm, amsfonts, bm}
\usepackage{mathtools}
\usepackage{thmtools}
\usepackage{thm-restate}
\usepackage{xcolor}
\usepackage{xspace}
\usepackage[framemethod=TikZ]{mdframed}

\def\eqref#1{equation~\ref{#1}}

\def\1{\bm{1}}

\DeclareMathAlphabet{\mathsfit}{\encodingdefault}{\sfdefault}{m}{sl}
\SetMathAlphabet{\mathsfit}{bold}{\encodingdefault}{\sfdefault}{bx}{n}

\def\gD{{\mathcal{D}}}

\def\gL{{\mathcal{L}}}

\def\gP{{\mathcal{P}}}

\def\gU{{\mathcal{U}}}

\newcommand{\E}{\mathbb{E}}

\newcommand{\R}{\mathbb{R}}

\DeclareMathOperator*{\argmax}{arg\,max}

\newcommand{\xhdr}[1]{{\noindent\bfseries #1}.}
\newcommand{\cut}[1]{}

\newcommand{\namelong}{\textsc{Flow Map $Q$-Guidance}\xspace}
\newcommand{\nameshort}{\textsc{FMQ}\xspace}

\newcommand{\namelonginf}{\textsc{$Q$-Guided Beam Search}\xspace}
\newcommand{\nameshortinf}{\textsc{QGBS}\xspace}

\newcommand{\nameshorttr}{\textsc{FMQ}\xspace}

\newcommand{\ddd}{\mathrm{d}}

\newcommand\joey[1]{\noindent{\color{orange} {\bf \fbox{Joey}} {\it#1}}}

\mdfdefinestyle{MyFrame}{%
    linecolor=black,
    outerlinewidth=.3pt,
    roundcorner=5pt,
    innertopmargin=1pt,
    innerbottommargin=1pt,
    innerrightmargin=1pt,
    innerleftmargin=1pt,
    backgroundcolor=black!0!white}

\mdfdefinestyle{MyFrame2}{%
    linecolor=white,
    outerlinewidth=1pt,
    roundcorner=1pt,
    innertopmargin=3pt,
    innerbottommargin=2pt,
    innerrightmargin=7pt,
    innerleftmargin=7pt,
    backgroundcolor=black!3!white}

\mdfdefinestyle{MyFrameEq}{%
    linecolor=white,
    outerlinewidth=0pt,
    roundcorner=0pt,
    innertopmargin=0pt,
    innerbottommargin=0pt,
    innerrightmargin=7pt,
    innerleftmargin=7pt,
    backgroundcolor=black!3!white}

\usepackage{cleveref}

\usepackage[normalem]{ulem}

\usepackage{algorithm}
\usepackage{algorithmic}
\usepackage{amsmath}
\usepackage{amssymb}
\usepackage{bbm}
\usepackage{thm-restate}

\newtheorem{theorem}{Theorem}[section]

\newtheorem{definition}[theorem]{Definition}

\usepackage{booktabs}
\usepackage{multirow}
\usepackage{wrapfig}

\hypersetup{
	colorlinks=true,       
	linkcolor=blue,        
	citecolor=blue,        
	filecolor=magenta,     
	urlcolor=blue         
}

\renewcommand*{\backrefalt}[4]{%
    \ifcase #1 \footnotesize{(Not cited.)}%
    \or        \footnotesize{(Cited on page~#2)}%
    \else      \footnotesize{(Cited on pages~#2)}%
    \fi
    }

\title{Aligning Flow Map Policies with Optimal $Q$-Guidance}

\author{%
  Christos Ziakas\textsuperscript{1}\thanks{Correspondence to c.ziakas24@imperial.ac.uk } \quad 
  Alessandra Russo\textsuperscript{1} \quad 
  Avishek Joey Bose\textsuperscript{1, 2} \\
  \\
  \textsuperscript{1}Imperial College London \quad 
  \textsuperscript{2}Mila 
}
\begin{document}

\maketitle

\begin{abstract}

\looseness=-1
Generative policies based on expressive model classes, such as diffusion and flow matching, are well-suited to complex control problems with highly multimodal action distributions. Their expressivity, however, comes at a significant inference cost: generating each action typically requires simulating many steps of the generative process, compounding latency across sequential decision-making rollouts. We introduce \emph{flow map policies}, a novel class of generative policies designed for fast action generation by learning to take arbitrary-size jumps---including one-step jumps---across the generative dynamics of existing flow-based policies. We instantiate flow map policies for offline-to-online reinforcement learning (RL) and formulate online adaptation as a trust-region optimization problem that improves the critic's $Q$-value while remaining close to the offline policy. We theoretically derive \namelong{} (\nameshort), a principled closed-form learning target that is optimal for adapting offline flow map policies under a critic-guided trust-region constraint. We further introduce \namelonginf{} (\nameshortinf), a stochastic flow-map sampler that combines renoising with beam search to enable iterative inference-time refinement. Across $12$ challenging robotic manipulation and locomotion tasks from OGBench and RoboMimic, \nameshort{} achieves state-of-the-art performance in offline-to-online RL, outperforming the previous one-step policy MVP by a relative improvement of $21.3\%$ on the average success rate.


\cut{
Flow-matching policies have emerged as a powerful paradigm for offline-to-online reinforcement learning (RL), yet incorporating Q-value guidance into their generation remains purely heuristic. In this work, we formalize one-step flow map policies and derive a closed-form step that maximizes the Q-value under a trust-region constraint. Building on these theoretical contributions, we propose \namelong (\nameshort), a method for aligning flow map policies with optimal  Q-guidance. In particular, we efficiently distill the optimally aligned offline policy into the online flow map policy, while anchoring the trust region to the offline behavioral prior. In addition, we introduce a beam search combining stochastic renoising with the closed-form step for iterative refinement. FMQ outperforms the state-of-the-art offline-to-online RL methods in robotic manipulation and locomotion tasks across OGBench and RoboMimic benchmarks.}

\end{abstract}

\vspace{0.5em}

\vspace{0.3em}

\section{Introduction}
\label{sec:introduction}

\looseness=-1
The supreme promise of offline reinforcement learning (RL) is that effective policies can be bootstrapped in a scalable data-driven manner without costly environment interaction~\citep{levine2020offline}. This scaling philosophy is central to modern \emph{data-driven} reinforcement learning~\citep{kumar2019data,fu2020d4rl} that utilizes ever-growing diverse offline datasets~\citep{collaboration2023open}, and now powers learning policies in high-impact applications from dialogue~\citep{jaques2019way} to robotic navigation~\citep{kahn2018composable}. Indeed, imitating the highly multi-modal action distribution of expert behavior policies in such complex control problems necessitates the use of expressive policy classes that go beyond restrictive unimodal Gaussian actors~\citep{zhu2023diffusion,wang2022diffusion}. 

\looseness=-1
Generative policies based on dynamic mass transport, such as diffusion models~\citep{sohl2015deep,song2020score} and flow-matching~\citep{liu2022flow,lipman2022flow,stochasticinterpolants}, provide a compelling alternative to Gaussian policies as they learn to map a simple base distribution into a rich state-conditioned action distribution~\citep{chi2025diffusion}. The expressivity gains of generative policies make them particularly favorable for offline and offline-to-online RL~\citep{fujimoto2021minimal,tarasov2023revisiting}, where the policy must first model diverse behaviors from a static dataset and then improve through interaction. However, the price of expressive generative policies is computationally expensive inference-time simulation. More precisely, generating actions requires numerically integrating dynamics from noise to action and is executed at every environment step~\citep{yang2023policy}---inhibiting deployment in online and real-world settings~\citep{zhan2024transformation,zhan2025physics}.

\begin{figure}[t]
    \centering
    \includegraphics[width=0.48\textwidth]{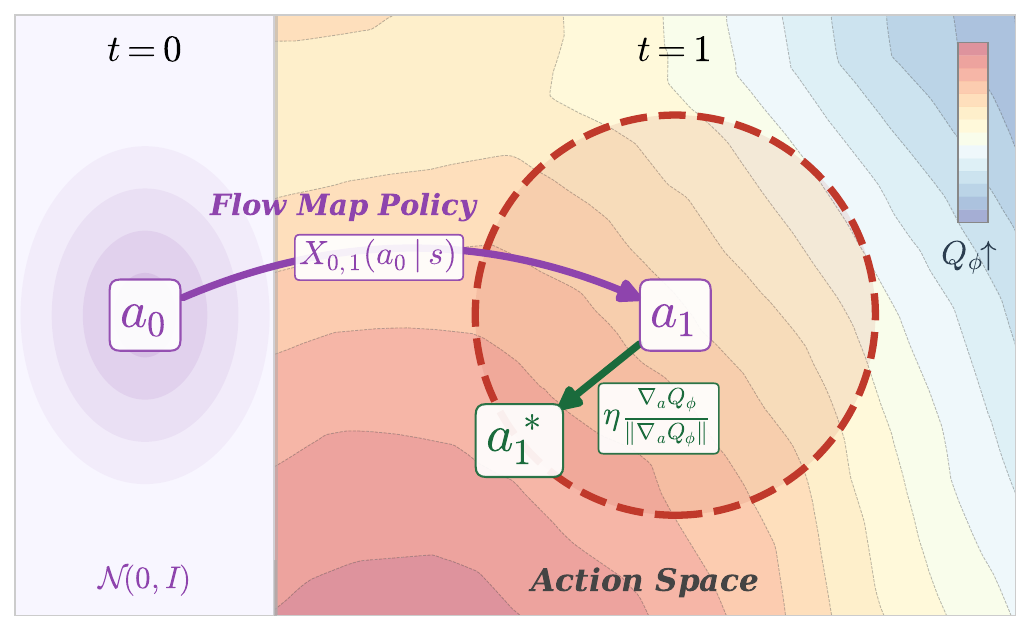}
    \hfill 
    \includegraphics[width=0.48\textwidth]{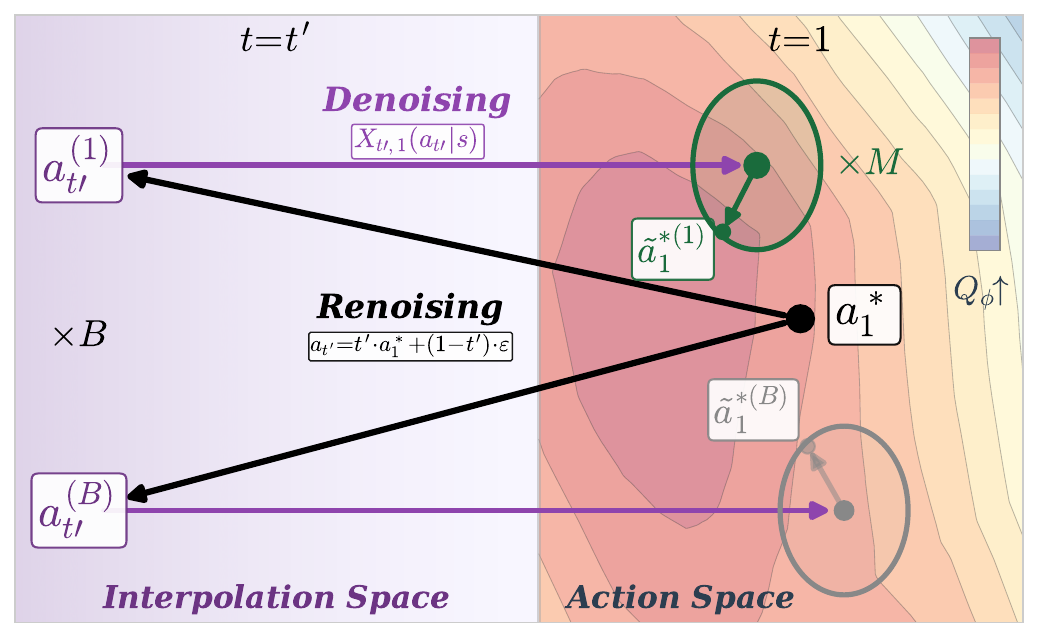}
    \vspace{-5pt}
    \caption{ \small \looseness=-1 (Left) \textbf{\nameshorttr}: one-step flow map policy transports noise $a_0$ to action $a_1$; then, trust-region projection displaces action $a_1$ to $a_1^*$ that maximizes $Q$-value. (Right): \textbf{\nameshortinf} ($M{=}1$, $B{=}2$): renoising corrupts $a_1^*$ into $B$ intermediate states $a_{t'}$, which the flow map policy then denoises to generate $B$ candidate actions; candidates are updated via the optimal trust-region displacement to maximize $Q_\phi$, and the highest-valued $M$ actions are selected.}
    \label{fig:simple_side_by_side}
    \vspace{-5pt}
\end{figure}

\looseness=-1
Addressing the inference latency of generative policies is of critical interest in order to extract maximal utility from generative policies, with several recent efforts attempting to learn one-step offline policies~\citep{fql_park2025,zhan2026mean}. However, in the context of offline-to-online RL, one-step generation remains insufficient as the policy must improve beyond the offline behavioral prior. In addition, current one-step generative policies commonly rely on heuristic-based generate-and-select procedures, such as best-of-$N$ sampling, to bias actions toward high critic $Q$-values. Critically, this heuristic approach offsets the computational advantage of one-step policies by requiring many policy and critic evaluations per decision, while simultaneously not guaranteeing---for any finite $N$---\emph{optimal local improvement} of the action sampled under the critic.

\cut{

In contrast, we formalize one-step \emph{flow map policies} and propose an analytically optimal, closed-form method that aligns the action distribution with Q-value guidance, eliminating the need for BPTT, distillation networks, or best-of-$N$ heuristics.
}

\looseness=-1
\xhdr{Present work}
In this work, we introduce \emph{flow map policies}, a novel class of generative policies that learn the unique two-time jump operator associated with the probability flow ODE of diffusion and flow-matching policies. Crucially, flow map policies generalize existing one-step policies for offline-to-online RL, e.g., mean velocity policies~\citep{zhan2026mean}, while introducing new learning objectives yielding Lagrange, Euler, and Progressive variations of flow map policies. In stark contrast to prior work, for principled online adaptation of one-step flow map policies, we formulate a trust-region optimization problem and derive an analytically optimal, closed-form method that aligns the action distribution with $Q$-value guidance. This yields our contribution \namelong (\nameshort), 
which constructs a novel self-bootstrapped learning target---depicted in~\cref{fig:simple_side_by_side}---as the projected action-gradient of the critic, and eliminates the need for distillation networks or best-of-$N$ heuristics.


\looseness=-1
\looseness=-1
We additionally introduce a complementary inference-time search procedure at evaluation, \namelonginf (\nameshortinf) that combines stochastic sampling through renoising candidate samples to an intermediate state with $Q$-guided beam search around the trust region. Importantly, \nameshortinf produces diverse refinements around high $Q$-value actions without costly ODE simulations or additional learning during online adaptation. 
We summarize our core contributions as follows:

\begin{enumerate}[topsep=0pt, partopsep=0pt, itemsep=0pt, parsep=0pt, leftmargin=*]
    \item \looseness=-1 \xhdr{Flow map policies} We introduce flow map policies as a framework for learning one-step policies as two-time jump operators for flow-based generative actors.
    \item \looseness=-1 \xhdr{Algorithms} We introduce \nameshort, which efficiently adapts flow map actors using optimal $Q$-guidance (\Cref{thm:trust_region}) . We further introduce \nameshortinf, a stochastic inference-time refinement algorithm that combines flow map renoising, beam selection, and trust-region $Q$-guidance.
    \item \looseness=-1 \textbf{State-of-the-Art Performance:} 
    Across $12$ manipulation and locomotion tasks from OGBench and RoboMimic, \nameshort{} outperforms prior SOTA offline-to-online baselines by a relative average of $21.3\%$ while being on average $\approx 2.77 \times$ more efficient during online adaptation.
\end{enumerate}

\cut{
Diffusion and flow-matching generative models have emerged as powerful policy representations for reinforcement learning (RL), enabling expressive action distributions that Gaussian policies cannot capture~\citep{chi2023diffusion}. However, this expressiveness comes at a computational cost: sampling requires numerical simulation of a diffusion-policy SDE or flow-policy ODE, limiting its efficiency in real-world deployment.

To address this bottleneck, we leverage the flow map operator~\citep{boffi2024flow,flowmaps}---the unique jump operator associated with the flow-policy ODE---dramatically speeding up inference. In particular, we introduce \emph{flow map policies} that enable efficient one-step deployment by taking large jumps directly from noise to the physical action. 

Recent work in offline-to-online RL trains mean flow policies—a special case of flow maps—by relying on best-of-N sampling to simply select the highest Q-value action at each step \citep{mvp}. However, best-of-$N$ scales poorly due to requiring $N$ forward passes per action, and provides no mathematical guarantee that the selected action is optimal for any finite $N$. To overcome this, we formulate the problem of steering the action distribution to maximize Q-values as a trust-region-constrained optimization problem. By linearizing the Q-function, we derive an analytically optimal, closed-form steering step that maximizes the Q-value subject to a trust-region constraint on the current action. Building on this theoretical contribution, we propose \namelong (\nameshort), a framework for reward alignment in flow map policies for offline-to-online RL. Then, we introduce two novel, orthogonal mechanisms for reward alignment that utilize our derived closed-form steering step: a training-time fine-tuning method and an inference-time steering search algorithm.


\looseness=-1
\xhdr{Main contributions}
Our proposed training-time method fine-tunes the actor flow map to capture the optimally steered offline policy directly into its weights, while preventing catastrophic unlearning by anchoring trust region to the offline behavioral prior. During training, we dynamically scale the trust region based on the critic's epistemic uncertainty, allowing the policy to take aggressive optimization steps in well-explored regions while remaining highly conservative in uncertain states. Second, we introduce an inference-time search algorithm that iteratively steers a population of optimally steered action candidates using stochastic renoising. By partially corrupting candidate actions and denoising them via the flow map, the agent can efficiently explore distinct modes of the action distribution with high Q-values. We evaluate \nameshort on 12 continuous control tasks from OGBench~\citep{park2024ogbench} and RoboMimic, spanning complex manipulation and navigation domains.
}




\section{Background and Preliminaries}
\label{sec:background}

\looseness=-1
\xhdr{Offline-to-Online RL}
We consider a Markov Decision Process (MDP)~\citep{sutton1998reinforcement} defined by the tuple $\mathcal{M} = (\mathcal{S}, \mathcal{A}, r, P, \gamma)$, where $\mathcal{S} \subseteq \mathbb{R}^n $, $\mathcal{A} \subseteq \mathbb{R}^d$ denote continuous state-action spaces, $r(s,a)$ the reward function, $P(s'|s,a)$ the transition probability distribution, and $\gamma \in [0, 1)$ the discount factor. The objective of reinforcement learning is to train a policy $\pi(a|s)$ that maximizes the expected cumulative discounted return, $J(\pi) = \mathbb{E}_\pi \big[ \sum_{\tau=0}^\infty \gamma^{\tau} r(s^{\tau}, a^{\tau}) \big]$, where $\tau$ denotes the timestep. Offline-to-online RL is a two-stage learning framework consisting of offline pre-training followed by online fine-tuning. In the offline pre-training phase, a behavioral prior policy is trained on a static dataset $\mathcal{D} = \{(s, a, r, s')\}$, providing an initialization. Subsequently, during online fine-tuning, the policy directly interacts with the environment. A popular approach is actor-critic methods, which employ an actor $\pi(a \mid s)$ and a critic $Q_\theta(s,a)$ that approximates the expected discounted return under policy $\pi$: $Q^\pi(s,a) = \mathbb{E}_{\pi,P} \left[ \sum_{i=0}^\infty \gamma^i r(s^i, a^i) \;\middle|\; s^0 = s, a^0 = a \right]$. 

\looseness=-1
\xhdr{Flow matching polices}
\label{sec:generative_policies}
A generative policy $a_1 \sim \pi_{1}( \cdot | s)$ conditioned on state $s$, can be formulated as transport plan which pushes forward an easy to sample reference measure $p_0(a_0) \in \gP(\R^d)$ to a desired measure of (optimal) target action $p_1 (a_1):=  p^*_{\text{target}}(a^*) \in \gP(\R^d)$. The subscripts are indicative of a notion of time where the process evolves a (pseudo)-action from the prior at time $t=0$, i.e. $a_0 \sim p_0$, to an action that follows the target distribution $a_1 \sim p_1(a_1)$ at time $t=1$. We highlight that the time $t$ associated with the transport dynamics is distinguished from $\tau$, which is associated with the MDP. Formally, a \emph{flow-policy} is a one-parameter diffeomorphism, conditioned on a state $s$,  $\psi_{t}(\cdot | s):[0,1] \times \R^{d} \to \R^d$ that is the solution to the following ordinary differential equation:
\begin{equation}
     \frac{d}{dt} \psi_{t}(a_t \mid s) = v_{t} \left(\psi_{t}(a_t \mid s) \right), \quad \psi_0(a_0) = a_0,
     \label{eq:flow_policy_ode}
\end{equation}
\looseness=-1
with initial conditions $\psi_0(a_0) = a_0$. Furthermore, $v_t:[0,1]\times\R^d\to\R^{d}$ is a time-dependent (instantaneous) velocity field. 
In effect, the thesis of the generative policy problem is to learn a policy that pushes forward the base measure as follows $\pi_{1}(\cdot | s) := p_1 = [\psi_{1}( \cdot  | s)]_{\#}(p_0)$. We highlight that $\psi_1$ produces a \emph{deterministic} action while $\pi_1(\cdot | s)$ is induced \emph{distribution} over actions at $t=1$ by the flow-policy. 
To build the flow-policy, we can associate it with a conditional probability path $p_t( \cdot | z): [0, 1] \times \gP(\R^d) \to \gP(\R^d)$ which is a time-indexed interpolation in probability space between two distributions $p_0$ and $p_1$. In its simplest form, the conditioning variable can be taken to the endpoints $z:=(a_0, a_1)$, and a particle level interpolation that is simply a convex combination of the endpoints can be employed, i.e., $a_t = (1-t)a_0 + ta_1$. We say $\psi_t$ generates $p_t$ if it pushes forward $p_0$ to $p_1$ by following the ODE in~\cref{eq:flow_policy_ode}. To learn the flow policy, it is easier to regress against a known target conditional velocity $v^*_{t} (a_t | z,s)$ field that generates $p_t$. With access to such a $v^*_t$, learning can proceed using the conditional flow-matching loss~\citep{tong2023improving,stochasticinterpolants,liu2022flow,lipman2022flow,peluchetti2023non}, which is a simple simulation-free regression objective:
\begin{equation*}
\gL_{\rm CFM} = \mathbb{E}_{t, q(z), p_t(a_t | z,s)} \|v_{t}( a_t \mid s) - v^*_t(a_t \mid z,s)\|_2^2 = \mathbb{E}_{t, q(z), p_t(a_t \mid z,s)} \|v_{t}(a_t \mid s) - (a_1 - a_0)\|_2^2,
\label{eqn:CFM}
\end{equation*}
\looseness=-1
where $q(z)$ is a coupling over the states---e.g., independent coupling $q(z) = p_0(a_0) p_1(a_1)$---and in the last equality we substitute $v^*(a_t \mid z,s) = a_1 - a_0$ with its analytic linear speed target velocity. 

\cut{
We adopt the flow matching framework to learn a continuous transformation from a base distribution $p_{\text{init}} = \mathcal{N}(0, I)$ to the data distribution $p_{\text{data}}$. We define a conditional probability path by interpolating between $x_0 \sim p_{\text{init}}$ and $x_1 \sim p_{\text{data}}$:
\begin{equation}
x_t = c_t x_1 + s_t x_0, \quad x \sim p_t(\cdot \mid x_1) = \mathcal{N}(x; c_t x_1, s_t^2 I).
\end{equation}

For each $x_1 \sim p_{\text{data}}$, let $v_t(\cdot \mid x_1)$ denote a conditional vector field such that the corresponding ODE induces the target conditional probability path:
\begin{equation}
x_0 \sim p_{\text{init}}, \qquad 
\frac{d}{dt} x_t = v_t(x_t \mid x_1)
\;\;\Rightarrow\;\;
x_t \sim p_t(\cdot \mid x_1), \quad t \in [0,1].
\end{equation}

Flow matching learns a parametric vector field $\hat{v}_t^\theta(x)$ by minimizing the marginal flow matching loss:
\begin{equation}
\mathcal{L}_{\text{FM}}(\theta)
= \mathbb{E}_{t \sim \mathrm{Unif}[0,1], \; x \sim p_t}
\left[
\| \hat{v}_t^\theta(x) - v_t(x) \|^2
\right].
\end{equation}

Since the marginal objective is intractable, we instead optimize the equivalent conditional flow matching objective:
\begin{equation}
\label{eq:cfm}
\mathcal{L}_{\text{CFM}}(\theta)
= \mathbb{E}_{t \sim \mathrm{Unif}[0,1], \; x_1 \sim p_{\text{data}}, \; x \sim p_t(\cdot \mid x_1)}
\left[
\| \hat{v}_t^\theta(x) - v_t(x \mid x_1) \|^2
\right].
\end{equation}
}

\section{\namelong}
\label{sec:method}

\looseness=-1
Generative policies operate over an inner-time axis, i.e., ODE simulation time, that is distinct from the evolution of the MDP. Consequently, for every state along the trajectory $s^{\tau}$, a corresponding action $a^{\tau}_1$ must be generated through numerical simulation of the flow-policy ODE in~\cref{eq:flow_policy_ode}---necessitating large amounts of function evaluations of the policy network. We next address this computational inefficiency through learning the flow-map, which dramatically speeds up simulation by taking large jumps along the ODE trajectory. We organize the remainder of this section as follows: in~\S\ref{sec:flow_map_policies} we introduce \emph{flow-map policies} and apply them to offline RL in~\S\ref{sec:phase1}.
In~\S\ref{sec:online_adaptation} we rigorously design an efficient online adaptation update using a trust-region based on the critic's $Q$-function.
Finally, in~\S\ref{sec:inference_steering} we introduce our stochastic sampling approach to refine generated actions at inference.

\looseness=-1
\subsection{Flow Map Policies}
\label{sec:flow_map_policies}
\looseness=-1
For high-fidelity action generation using flow-matching policies, it remains critical to simulate the infinitesimal dynamics of the parametrized velocity field in~\cref{eq:flow_policy_ode}. Instead of solving the ODE, we can parametrize and learn the unique two-time operator associated with the flow-matching policy. 


\begin{mdframed}[style=MyFrame2]
\begin{definition}[Flow-Map Policy]
\label{def:flow_map_policy}
Let $X_{r,t}: [0,1]^2 \times \mathcal{S} \times \mathbb{R}^d \to \mathbb{R}^d$ be a flow map that evolves the action dynamics between any $(r,t) \in [0,1]$, conditioned on the MDP state $s \in \mathcal{S}$ governed by~\cref{eq:flow_policy_ode}, and satisfying the jump condition $X_{r,t}(a_r|s) = a_t$. The flow-map policy is then the distribution induced by this map evaluated at time $t=1$, where $ \pi(a | s) =  [X_{r,1}]_{\#} p_r(a_r|s)$.
\end{definition}
\end{mdframed}
\looseness=-1
To parametrize the underlying flow map that induces this policy, we leverage the \emph{average action velocity} $u_{r,t}: [0,1]^2 \times \mathcal{S} \times \mathbb{R}^d \to \mathbb{R}^d$, between the two time points $r,t$ with the condition $r \leq t$:
\begin{equation}
    X_{r,t}(a_r \mid s) = a_r + (t-r) u_{r,t}(a_r \mid s), \quad u_{r,t} (a_r \mid s) = \frac{1}{t-r}\int^t_r v_\tau(a_\tau \mid s) d\tau.
    \label{eq:flow_map parametrization}
\end{equation}
\looseness=-1
We note that, using~\cref{eq:flow_map parametrization}, we take jumps of size $t-r$ along the ODE trajectory. 
Furthermore, evaluating this flow map at the boundaries $r=0$ and $t=1$ yields the one-step policy $X_{0,1}$. 
We also highlight that the instantaneous velocity corresponds to the flow-matching policy can be recovered by taking the time limit yielding the tangent condition: $\underset{r\to t}{\lim} \partial_t X_{r,t}(a_r \mid s) = u_{t,t}(a_t|s) := v_t(a_t \mid s)$. As a result, this allows a supervision signal along the time diagonal $r=t$ amounting to classical flow-matching,
\begin{equation}
    \gL_{\text{Diag}} = \mathbb{E}_{t, q(z), p_t(a_t \mid  z,s)} \|u_{t,t}(a_t \mid s) - (a_1 - a_0)\|_2^2.
    \label{eq:diag_loss}
\end{equation}
\looseness=-1
To train the underlying flow map on the off-diagonal $r<t$, we follow the standard practice of enforcing consistency rules that are derived from satisfying the flow-map jump condition, as well as the semi-group property of the ODE~\citep{flowmaps}.  This leads to PINN style $\ell_2$-regression objectives that \emph{distill} the approximated ODE velocity field into $X_{r,t}$ by
enforcing the Lagrange, Euler, and Progressive conditions of the flow map on the off-diagonal times $r <t$ combined with a stop-gradient $\texttt{sg}$:

\begin{mdframed}[style=MyFrame2]
\begin{enumerate}[topsep=0pt, partopsep=0pt, itemsep=0pt, parsep=0pt, leftmargin=*]
    \item \xhdr{Lagrangian policy-distillation}
    \begin{equation}
        \label{eqn:lsd}
       \gL_{\text{LPD}} =  \int_0^1\int_0^t\E_{p_r(a_r \mid z,s)}\left[\left|\partial_t X_{r, t}(a_r \mid s) - \texttt{sg}\left(u_{t, t}(X_{r, t}(a_r \mid s))\right)\right|^2\right]dr dt,
    \end{equation}
    \item \xhdr{Eulerian policy-distillation}
    \begin{equation}
        \label{eqn:esd}
       \gL_{\text{EPD}} =  \int_0^1 \int_0^t\E_{p_r(a_r \mid z,s)}\left[\left|\partial_r X_{r, t}(a_r \mid s) + \texttt{sg}\left(\nabla X_{r, t}(a_r \mid s)u_{r, r}(a_r \mid s) \right)\right|^2\right]dr dt,
    \end{equation}
    \item \xhdr{Progressive policy-distillation}
    \begin{equation}
       \label{eqn:psd} 
       \begin{aligned}
        & \gL_{\text{PPD}} = \int_0^1\int_0^t\E_{p_r(a_r \mid z,s)}\left[\left|X_{w, t}(X_{r,w}(a_r \mid s)) - \texttt{sg}\left(X_{r,t}(a_r \mid s)\right)\right|^2\right]  dr dt d\gamma,
       \end{aligned}
    \end{equation}
    where $w = (1-\gamma) r + \gamma t$ with $\gamma \in [0, 1]$.
\end{enumerate} 
\end{mdframed}

\looseness=-1
\xhdr{Relation to mean-flow policies}
Critically, in contrast to prior work, equating policy learning with flow-maps unlocks the entire arsenal of flow-map-based learning objectives---with mean flows policies~\citep{meanflows,nguyenone,zhan2026mean} being a specific instantiation of the Eulerian policy. In particular, mean-flow policies can be derived as a specific instance of the Eulerian policy distillation objective outlined in~\cref{eqn:esd} above (see~\S\ref{app:equivalence}), and the instantaneous velocity constraint is an application of the tangent condition and is simply the diagonal loss in~\cref{eq:diag_loss}.

\cut{
We then exploit flow-map policies and design \namelong (\nameshort), 
a two-phase algorithm for offline-to-online reinforcement learning with flow map policies. In phase one, we pre-train an efficient (stochastic) flow-map actor through bootstrapped \emph{self-distillation} objectives. In phase two, we demonstrate a series of novel mechanisms to fine-tune this actor online by steering the flow-map policy towards high-$Q$ regions via a novel trust-region loss.
}

\cut{
\joey{Christos version}

As numerically integrating over an ODE is expensive, the flow map $X_{s,t}$ is a mapping from time $s$ to time $t$ along that trajectory. Formally, it is defined by the jump from the ODE state at time $s$ to time $t$ along the ODE trajectory:
\begin{equation}
X_{r,t}(x_r) = x_t.
\end{equation}

The tangent condition connects the parameterized flow map to the underlying ODE by taking the limit:
\begin{equation}
\lim_{r \to t} \partial_t X_{r,t}(x_r) = u_{t,t}(x),
\end{equation}
where $u: [0,1]^2 \times \mathbb{R}^d \to \mathbb{R}^d$ is the average velocity of the trajectory between times $s$ and $t$. In practice, we learn a neural parameterization $\hat{u}_{r,u}^\theta$, along with its induced flow map $\hat{X}_{r,u}^\theta$ that enforces the tangent condition along the diagonal $r = u$ via the flow matching objective~\ref{eq:cfm}:
\begin{equation}
\mathcal{L}_b(\theta) = \int_0^1 \mathbb{E} \left[ \left\| \hat{u}_{t,t}^\theta(x_t) - \dot{x}_t \right\|^2 \right] dt.
\end{equation}

To learn the flow map $X_{r,t}$, it remains to estimate $u_{r,t}$ away from the diagonal $r = t$. To this end, we leverage results that relate the off-diagonal $u_{r,t}$ to the instantaneous field $u_{t,t}$. One off-diagonal loss is the Eulerian self-distillation framework:
\begin{equation}
\mathcal{L}_{\mathrm{ESD}}(\theta) = \int_0^1 \int_0^t \mathbb{E}_{x_0, x_1} \left[ \left\| \partial_s \hat{X}_{r,t}^\theta(x_r) + \nabla \hat{X}_{r,t}^\theta(x_r)\hat{u}_{r,r}^\theta(x_r) \right\|^2 \right] dr \, dt.
\end{equation}

In sampling, the flow map operator $\hat{X}_{s,t}$ is evaluated via the Euler parameterization
\begin{equation}
\label{eq:sd}
\hat{X}_{r,t}^\theta(x_s) = x + (t - r)\hat{u}_{r,t}^\theta(x_r),
\end{equation}

where $\hat{u}^\theta$ is the minimizer of $u_{r,t}(x)$ for this self-distillation objective:
\begin{equation}
\mathcal{L}_{\mathrm{SD}}(\theta) = \mathcal{L}_b(\hat{v}) + \mathcal{L}_{ESD}(\theta).
\end{equation}

Other examples of self-distillation objectives within this framework include the Lagrangian and Progressive losses. In addition, this diagonal component can be provided by a frozen, pre-trained teacher model.
}
\subsection{Offline RL with Flow Map Policies}
\label{sec:phase1}

\looseness=-1
We now deploy flow-map policies for offline-to-online reinforcement learning within an actor-critic framework. We first pre-train an efficient flow-map actor on an existing offline dataset $\mathcal{D} = \{(s, a_1, r, s')\}$, along with a critic $Q$-network. We parametrize the actor as $u_{r,t}(a_r|s)$ over all time pairs $(r,t) \in [0,1]^2$, trained with the policy self-distillation objectives from~\S\ref{sec:flow_map_policies}: $\mathcal{L}_{\mathrm{actor}}^{\mathrm{off}} = \mathcal{L}_{\text{Diag}} + \lambda\, \mathcal{L}_{\text{SD}}$, where $\gL_{\text{SD}}$ corresponds to any of the policy self-distillation losses and $\lambda$ is a hyper-parameter that controls the strength of off-diagonal training. For maximally efficient action generation, we can simply invoke the flow-map policy $a_1 = X^{\text{off}}_{0,1}(a_0|s)$ and generate actions in a single forward pass by directly transporting the prior noisy action $a_0$ to the clean action $a_1$ in one step:
\begin{equation}\label{eq:one_step_inference}
a_1 = a_0 + u^{\text{off}}_{0,1}(a_0 \mid s), \quad a_0 \sim \mathcal{N}(0, I).
\end{equation}
\looseness=-1
We train the critics via clipped double $Q$-learning with EMA targets $Q_{{\phi}_j}$~\citep{fujimoto2018doubleqlearning}:
\begin{equation}\label{eq:critic_loss}
\mathcal{L}_{\text{critic}}(\phi_j) = \mathbb{E}_{(s, a_1, r, s') \sim \mathcal{D}} \left[ \left( Q_{\phi_j}(s, a_1) - y \right)^2 \right], \quad
y = r + \gamma \min_{j=1,2} Q_{\phi_j}(s', X^{\text{off}}_{0,1}( a_0' \mid s')).
\end{equation}

\subsection{Efficient Trust-Region Based Online Adaptation}
\label{sec:online_adaptation}

\looseness=-1
\cut{Transitioning from an offline to an online phase introduces the significant technical hurdle of identifying the optimal action to imitate when training generative policies. Indeed, in continuous action spaces without a curated dataset $\gD$ like in the offline phase, exhaustively searching for the optimal action $a^*$ as a learning target is all but computationally infeasible. A common strategy, instead, is to leverage the critic's learned $Q$-function to find critic-guided imitation targets. Implementing such a thesis falls under the paradigm termed ``generate-and-select", with the simplest instantiation being a ``best-of-$N$" sampler that greedily picks the best action under $Q$-function out of $N$ draws from the generative policy~\citep{zhan2026mean}. However, this naive strategy imposes a non-trivial drawback of requiring a large number of sampled actions $N$, which in the best-case scenario requires $N$ one-step simulations of the flow-map actor and $Q$-function evaluations. We next develop a more principled approach that finds the optimal action---without sampling $N$ actions---by constructing a trust region.}


\cut{
The optimization problem in~\cref{eq:nonlinear_trust_region_displacement_problem} pinpoints that starting from a noisy action state $a_r$ 
we must modify the base offline average velocity endpoint prediction $u_{r,1}(a_r|s)$ in a manner that maximizes the critics $Q$-value at the terminal time $t=1$. We further highlight that the trust-region constraint~\cref{eq:nonlinear_trust_region_displacement_problem} can be rewritten from an $\eta$-radius over $a_1$ to a constraint over the average velocity as both the offline flow map actor and the variable of optimization $\bar{u}_{r,1}$ share the same starting state $a_r$.
}

\looseness=-1
Transitioning to the online phase introduces the challenge of identifying the optimal action to imitate when training generative policies: in continuous action spaces without a curated dataset, solving $\arg\max_a Q(s,a)$ as a learning target is intractable. A common strategy is the ``best-of-$N$'' heuristic, which draws $N$ actions from the policy and selects the one with the highest $Q$-value~\citep{zhan2026mean}. However, this naive strategy imposes a non-trivial drawback of requiring a large number of sampled actions $N$, which requires at minimum $N$ one-step simulations of the flow-map actor and $Q$-function evaluations. We next develop a more principled approach that finds the optimal action by constructing a trust region. Consider a flow-map policy $\pi^{\text{off}}(\cdot|s)$, the natural question for online adaptation is:
\begin{center}
\vspace{-5pt}
    \textit{\textbf{Q.} What is the optimal perturbation $\Delta$ for $a_1 \sim \pi^{\text{off}}$ that maximizes the critic's $Q$-function?}
\vspace{-5pt}
\end{center}
\looseness=-1
To answer this question, we assume the existence of an optimal action $a_1^* = a_1 + \Delta^*$ that is feasible---i.e., reachable from the flow-map policy via a perturbation $\Delta$. To prevent unbounded deviation from $a_1 \sim \pi^{\text{off}}(\cdot | s)$, we constrain $\Delta$ within a trust region of radius $\eta$ around the critic's current $Q$-value. This yields the following non-linear optimization problem that maximizes the critic's $Q$-function:
\begin{equation}
\label{eq:nonlinear_trust_region_displacement_problem}
\begin{aligned}
 \argmax_{\Delta} \quad & \mathbb{E}_{r\sim \gU[0,1)} \left[Q_\phi\!\left(s, X^{\text{off}}_{0,r}(a_0 \mid s) + \Delta \right)\right] \quad \text{s.t.}
\quad
& \left\| \Delta \right\|_2
\leq \eta
\end{aligned}
\end{equation}
\looseness=-1
 In the case where the $\Delta$-perturbation is given as the average velocity network $u_{r,1}(a_r |s)$, constraining the perturbation $\|\Delta\|_2 \leq \eta$ in action space is equivalent to bounding $\|u_{r,1}(a_r|s) - u^{\text{off}}_{r,1}(a_r|s)\|_2$. As the critic $Q$-function is non-linear, this optimization problem is challenging to solve in closed form. Instead, we can consider a first-order approximation of optimality that aims to find \emph{optimal target displacement} to any generic reference $u^{\text{ref}}_{r,1}(a_r|s)$. Interestingly, under these settings, the analytic expression of the optimal target displacement admits a closed-form expression.

\begin{mdframed}[style=MyFrame2]
\begin{restatable}{theorem}{theoremone}
\label{thm:trust_region}
\looseness=-1
Consider a flow-map policy $\pi^{\text{ref}}(\cdot|s)$ with underlying flow map $X^{\text{ref}}_{r,1}$, generating actions $a_1 = a_r + (1-r)\,u^{\text{ref}}_{r,1}(a_r \mid s)$. The optimal average velocity $u^*_{r,1}$ that maximizes the first-order expansion of $Q_\phi$ around $a_1$, subject to trust-region constraint $\|u_{r,1} - u^{\text{ref}}_{r,1}\|_2 \le \eta$, is:
\begin{equation}
u^*_{r,1}(a_r \mid s) = u^{\text{ref}}_{r,1}(a_r \mid s) + \eta\, \frac{\nabla_a Q_\phi(s, a_1)}{\|\nabla_a Q_\phi(s, a_1)\|_2}.
\end{equation}
\end{restatable}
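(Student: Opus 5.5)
The plan is to reduce the statement to maximizing a linear functional over a Euclidean ball. Fix the state $s$, the time $r$, and the intermediate point $a_r$. Any candidate average velocity $u_{r,1}$ generates the action
\[
\tilde a_1 = a_r + (1-r)\,u_{r,1}(a_r \mid s) = a_1 + (1-r)\,\delta, \qquad \delta := u_{r,1}(a_r\mid s) - u^{\text{ref}}_{r,1}(a_r\mid s),
\]
since it shares the starting point $a_r$ with the reference map. The first-order expansion of $Q_\phi$ around $a_1$ is then
\[
Q_\phi(s,\tilde a_1) \approx Q_\phi(s,a_1) + (1-r)\,\langle g, \delta\rangle, \qquad g := \nabla_a Q_\phi(s,a_1).
\]
The constant term and the positive factor $(1-r)$, with $r<1$, do not affect the maximizer. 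The problem therefore becomes
\[
\max_{\delta}\ \langle g,\delta\rangle \quad \text{s.t.}\quad \|\delta\|_2\le\eta .
\]

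The key step is Cauchy--Schwarz, which gives $\langle g,\delta\rangle \le \|g\|_2\|\delta\|_2 \le \eta\|g\|_2$. Equality in the first inequality requires $\delta$ to be a nonnegative multiple of $g$. Equality in the second requires $\|\delta\|_2=\eta$. So, assuming $g\neq 0$, the maximizer is unique and equals $\delta^* = \eta\, g/\|g\|_2$. Adding back $u^{\text{ref}}_{r,1}$ gives the stated formula.

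As a cross-check, I would also verify the result through the KKT conditions. The Lagrangian is $-\langle g,\delta\rangle + \mu(\|\delta\|^2-\eta^2)$. If $g\ne0$, stationarity forces $\mu>0$, so the constraint is active. This again yields $\delta = g/(2\mu)$, with $\mu$ fixed by $\|\delta\|=\eta$.

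The main subtlety is not the optimization itself but making the reduction honest. First, the linearization must be taken at the fixed point $a_1$ produced by the reference map, so that $g$ does not depend on the decision variable $u_{r,1}$. Second, the degenerate case $g=0$ must be treated separately: there every feasible $\delta$ is optimal and the displayed formula is undefined, so the theorem implicitly assumes $\nabla_a Q_\phi(s,a_1)\neq 0$. Third, the expectation over $r$ in \cref{eq:nonlinear_trust_region_displacement_problem} decouples pointwise in $(r,a_r)$. This is because the per-$r$ maximizer $\eta g/\|g\|$ does not depend on $r$ when the expansion is taken at the common endpoint $a_1$, so the pointwise optimum is also optimal for the averaged objective.
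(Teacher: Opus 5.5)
Your argument is correct, and the reduction is the same as the paper's. Both substitute $\tilde a_1 - a_1 = (1-r)(u_{r,1}-u^{\text{ref}}_{r,1})$ into the linearization at the fixed reference action $a_1$, and both end up maximizing $\langle g,\delta\rangle$ over $\|\delta\|_2\le\eta$. The paper drops the factor $(1-r)$ without comment; you correctly note it is positive since $r<1$.

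The difference is the closing step. The paper writes the problem as a convex program with constraint $\tfrac12\|\delta\|_2^2-\tfrac12\eta^2\le 0$ and invokes Slater's condition so that KKT is necessary and sufficient. It then solves stationarity and complementary slackness to get $\lambda^*=\|g\|_2/\eta$ and the stated $u^*_{r,1}$, which is exactly your cross-check.

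Your primary Cauchy--Schwarz argument is more elementary. In one line it gives the global bound $\langle g,\delta\rangle\le\eta\|g\|_2$, uniqueness via the equality case, and the optimal first-order gain $(1-r)\eta\|g\|_2$, with no constraint qualification or duality needed. The KKT route, in exchange, is a template that transfers directly to other convex trust regions or extra constraints. It also exposes $\lambda^*$ as the marginal value of enlarging the radius.

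One caution concerns your third remark about the expectation over $r$. It is unnecessary, since the theorem is a pointwise statement in $(r,a_r)$, and its justification is shaky for two reasons.
\begin{itemize}
\item A common endpoint $a_1$ across $r$ is guaranteed only when $a_r$ lies on the reference trajectory of a single $a_0$. That is not the case for the interpolant $a_r=(1-r)a_0+r\,a_{\text{data}}$ used in training.
\item An action-space radius $\eta$ on the endpoint displacement $(1-r)\delta$ corresponds to a velocity-space radius $\eta/(1-r)$, not $\eta$.
\end{itemize}
It is better dropped than offered as part of the proof.
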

\end{mdframed}
\looseness=-1
\begin{proof}[Proof Sketch]
\looseness=-1 
To maximize $Q_\phi$ over $u_{r,1}$, we substitute the flow-map parameterization into a first-order Taylor expansion around $a_1$, which reduces the problem to maximizing $\langle \nabla_a Q_\phi(s, a_1),\, u_{r,1} - u^{\mathrm{ref}}_{r,1} \rangle$ subject to $\|u_{r,1} - u^{\mathrm{ref}}_{r,1}\|_2 \leq \eta$. Solving the associated KKT conditions yields the optimal closed-form solution. The full proof is provided in~\S\ref{app:proofs}.
\end{proof}
\looseness=-1
\Cref{thm:trust_region} holds for any reference flow-map policy. Setting $\pi^{\text{ref}} = \pi^{\text{off}}$, i.e., anchoring to the offline flow-map velocity $u^{\text{off}}_{r,1}(a_r|s)$, results in the following optimal average velocity:
\begin{equation}\label{eq:optimal_offline_avg_velocity}
u^*_{r,1}(a_r|s) = u^{\text{off}}_{r,1}(a_r|s) + \eta\, \frac{\nabla_a Q_\phi(s, a_1)}{\|\nabla_a Q_\phi(s, a_1)\|_2}.
\end{equation}
\looseness=-1
The analytic form of~\cref{eq:optimal_offline_avg_velocity} enables us to form a learning target for efficient online adaptation that we term \namelong (\nameshorttr). Specifically, we construct the interpolant $a_r = (1{-}r)\,a_0 + r\,a_{\text{data}}$ using noise $a_0 \sim \mathcal{N}(0,I)$ and actions from a replay buffer $a_{\text{data}} \sim \mathcal{D}$. This allows to then regress $u_{r,1}(a_r|s)$ against the optimal self-bootstrapped trust-region target below:
\begin{equation}\label{eq:fmq_f_loss}
\mathcal{L}_{\mathrm{FMQ}}(\theta) = \mathbb{E}_{r,\, a_0,\, a_{\text{data}}} \left[ \left\| u^{\theta}_{r,1}(a_r) - \mathrm{sg}\!\left( u^{\mathrm{off}}_{r,1}(a_r) + \eta \frac{\nabla_a Q_\phi(s, a_1)}{\|\nabla_a Q_\phi(s, a_1)\|_2 + \kappa_1} \right) \right\|_2^2 \right],
\end{equation}
\looseness=-1
where $\mathrm{sg}(\cdot)$ is the stop-gradient operator, and $\kappa_1 > 0$ is a stability constant as described in~\cref{alg:fmq_training}.


\looseness=-1
\xhdr{Uncertainty-Aware Adaptive Trust Region}
A fixed radius $\eta$ applies the same step size regardless of critic's confidence.
We now formulate an adaptive per-sample $\eta$-radius driven by a cheap heuristic, driven by capturing the epistemic uncertainty in the critic ensemble. Given a twin-critic ensemble, we define 
$\delta_{\text{critic}}(s,a) = \tfrac{1}{\sqrt{2}}|Q_{\phi_1}(s,a) - Q_{\phi_2}(s,a)|$ that captures the absolute discrepancy of $Q$-values amongst the critics. This allows us to design a batch-normalized per-sample \emph{effective} trust region,
\begin{equation}\label{eq:eta_eff}
  \eta_{\mathrm{eff}}(s, a)
    = \frac{1}{1 + \beta\,\tilde{\delta}_{\text{critic}}(s, a)},
  \qquad
  \tilde{\delta}_{\text{critic}}(s, a)
    = \frac{\delta_{\text{critic}}(s,a)}
           {\tfrac{1}{B}\sum_{i=1}^{B}\delta_{\text{critic}}(s_i, a_i) + \kappa_2},
\end{equation}
\looseness=-1
where $\kappa_2 > 0$ is a small constant, and $\beta$ a hyper-parameter.
By construction, $\eta_{\mathrm{eff}} \in (0,1]$ decays monotonically with the magnitude $\delta_{\text{critic}}(s,a)$: a small discrepancy $\delta_{\text{critic}}$ leads to larger steps, while conversely a larger discrepancy $\delta_{\text{critic}}$ results in the prioritization of the offline flow map actor. 

\subsection{Inference-Time $Q$-Guided Search}
\label{sec:inference_steering}
\looseness=-1
The flow map policy induces a mapping that transports noisy actions marginals $[X_{r,t}]_{\#}p_r = p_t$ for all $r,t \in [0,1]$. This mapping is fundamentally incapable of capturing the conditional posterior over endpoints that also maximize a critic's $Q$-value. As a result, the initial sampling of $a_0$ may have a disproportionate impact on the solutions to the optimization problem in~\cref{eq:nonlinear_trust_region_displacement_problem}. Instead of training a separate stochastic flow map for reward alignment~\citep{potaptchik2026meta,holderrieth2026diamond} we opt for a purely inference-time search strategy. Specifically, we next construct a stochastic sampler for flow map actors that also leverages the trust region of the critic's $Q$-value.


\looseness=-1
\xhdr{Stochastic Sampling with SNR}
To design a stochastic sampler, we leverage a renoising strategy based on the signal-to-noise ratio (SNR). In particular, 
given the one-step flow map actor \emph{after online adaptation} $a_1 = X^{\text{adapt}}_{0,1}(a_0 | s)$, we can re-noise by judiciously selecting a new time $t' < 1$. To do so, we design the re-noising interpolant by selecting $t' = \mathrm{SNR}/(1{+}\mathrm{SNR}) \in (0,1)$:
\begin{equation}\label{eq:renoise}
a_{t'} = t' \cdot a_1 + (1 - t') \cdot \varepsilon,
\quad \varepsilon \sim \mathcal{N}(0, I),
\end{equation}
\looseness=-1
A second application of the flow map then transports this intermediate state back to time $t=1$, i.e.,
$\tilde{a}_1 = X^{\text{adapt}}_{t',1}(a_{t'} \mid s)$. Crucially, each draw
of $\varepsilon$ yields a \emph{different} action
$\tilde{a}_1$---a stochastic sample from the flow map to which the trust-region update can be re-applied. The approach thus defines an iterative
refinement procedure during inference: (1) we corrupt the
current actions $a_1$ to $t'$ using~\cref{eq:renoise}, gaining access to diverse actions
noisy intermediate states $a_t'$, and (2) then re-apply the optimal trust-region $Q$-value projection to
each new sample, obtaining $\tilde{a}_1$. The noise level $t'$ controls the
exploration--exploitation balance: small $t'$ (low SNR) places
$a_{t'}$ closer to pure noise, allowing the flow map to explore
distant modes. Conversely, a large $t'$ (high SNR) preserves most of the
current action $a_1$, it restricts the update to a more local refinement.

\looseness=-1
\xhdr{$Q$-Guided Beam Search}
We now outline an inference-time search strategy that combines the stochastic sampler with beam search. 
This new algorithm is deployed only once at inference — i.e., inference-time scaling via search — and, as a result, does not affect training speed for online updates. We provide the full algorithmic description in~\cref{alg:diamond-steer}.
We instantiate this new final inference procedure \namelonginf (\nameshortinf), which balances exploration and exploitation. Specifically, \nameshortinf operates
over $M$ particles that are refined over $K$ steps along the already online-adapted flow map policy trajectory. In summary, the algorithm follows the following two steps iteratively:
\begin{enumerate}[topsep=0pt, partopsep=0pt, itemsep=0pt, parsep=0pt, leftmargin=*]
    \item \looseness=-1 \emph{\underline{Exploration:}} The first step in \nameshortinf constitutes an exploration phase that diversifies the candidate $N$ actions to a batch $B$ of intermediate states that then yields a total of $\bigcup_{i=1}^{N \cdot B} X^{\text{adapt}}_{t', 1} (a_t' |s)$ actions.
    \item \looseness=-1 \emph{\underline{Exploitation:}} The second step selects the most promising $M$ particles using the critic $Q_{\phi}(s, \tilde{a}_1)$, which are then used in the trust region update (\cref{eq:optimal_offline_avg_velocity}), before progressing to the next beam.
\end{enumerate}
\looseness=-1
After $K$ steps, the procedure returns $\arg\max_i Q_\phi(s, a_i)$. 
When $K{=}0$, the method reduces to best-of-$M$ Q-Steering (a single
application of Theorem~\ref{thm:trust_region} without iteration).


\section{Experiments}
\label{sec:experiments}
\begin{figure}[t]
\centering
\includegraphics[width=1.0\linewidth]{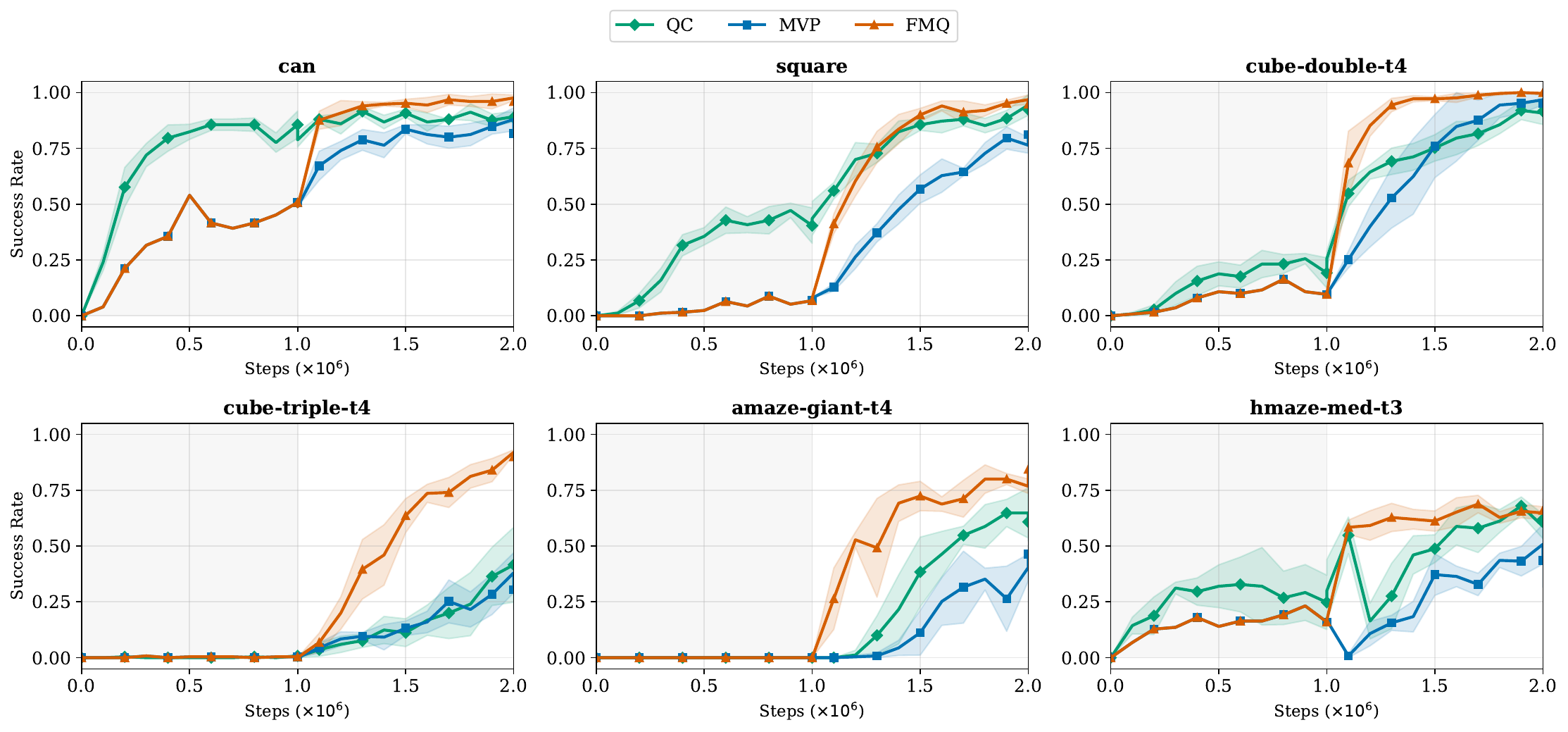}
\vspace{-20pt}
\caption{\small Training curves on 6 environments. Average success rate at every $100\mathrm{K}$ during 1M offline followed by 1M online steps over 5 seeds. Shaded regions indicate 95\% CIs.}
\vspace{-10pt}
\label{fig:training_curves_main}
\end{figure}

\looseness=-1
We investigate the application of \nameshort across $12$ robotic manipulation and locomotion tasks with varying difficulties across $7$ environments from the OGBench~\citep{ogbench} and Robomimic~\citep{robomimic} benchmarks. The manipulation tasks include two from Robomimic (\texttt{can}, \texttt{square}) and six from OGBench (\texttt{cube-dbl-t3/4}, \texttt{cube-trl-t3/4}, \texttt{scene-t4/5}). Locomotion is evaluated on four OGBench tasks (\texttt{hmaze-med-t3/4}, \texttt{amaze-gnt-t4/5}). During offline pre-training, we use multi-human demonstration datasets for Robomimic and the default noisy-expert datasets (play-style and navigate) for OGBench. In addition, we utilize single-task OGBench variants for offline-to-online RL. The humanoid and ant maze tasks use sparse rewards, while all others use dense rewards. 
For clarity, we report full training configurations and experimental setups in~\S\ref{app:implementation}.



\looseness=-1
\xhdr{Baselines} We compare \nameshorttr{} against two main baselines: (1) QC~\citep{li2025reinforcement} trains a multi-step flow matching policy with $10$ integration steps. (2) 
As our second baseline, we report the state-of-the-art method MVP~\citep{zhan2026mean}, which trains a mean flow policy with an initial velocity constraint. For MVP, we distinguish MVP$^*$ as results taken directly from the original paper, which is only available in $6/12$ environments considered here, from our reproduction MVP, allowing us to investigate all considered environments. All baselines share the same model architecture and follow the same clipped double $Q$-learning algorithm~\citep{fujimoto2018doubleqlearning}. At inference time, QC, MVP, and our method \nameshorttr{} all select actions via best-of-32 sampling, choosing the action with the highest $Q$-value.


\looseness=-1
\xhdr{Evaluation protocol} We follow the standard offline-to-online protocol from~\citet{ogbench}: $1\mathrm{M}$ gradient steps of offline pre-training using the provided dataset, followed by $1\mathrm{M}$ steps of online fine-tuning with environment interaction. During the online phase, newly collected transitions are appended to the replay buffer. To monitor training, we evaluate the policy every $100\mathrm{K}$ steps over $50$ episodes with randomized initial states. Finally, we evaluate the last checkpoint across $50$ unseen test episodes per environment and compute the average success rate across $5$ seeds and report the Interquartile Mean (IQM) alongside 95\% stratified bootstrap confidence intervals~\citep{agarwal2021deep}.

\subsection{Main Results}
\begin{wrapfigure}{r}{0.45\textwidth} 
    \centering
        \vspace{-40pt}
    \includegraphics[width=\linewidth]{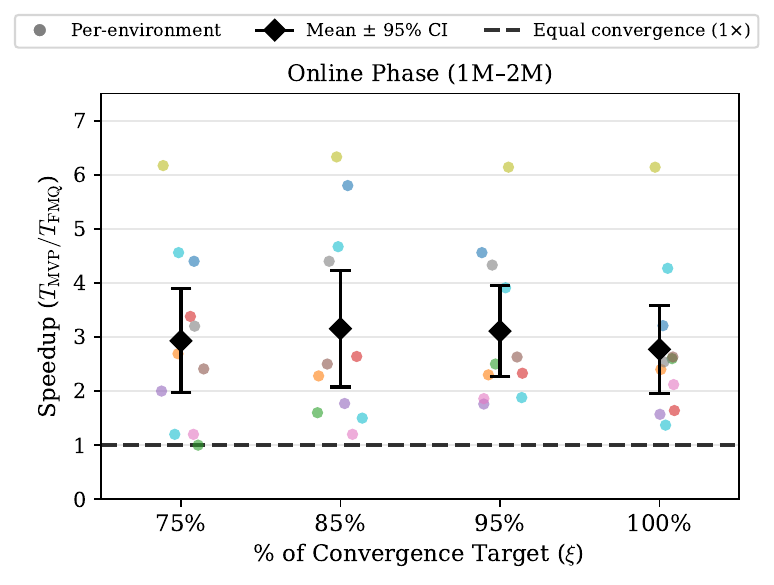} 
    \vspace{-20pt}
    \caption{\small \looseness=-1 Convergence speedup of \nameshorttr{} compared to MVP at success targets ($\xi$), with 95\% CIs.}
    \label{fig:speedupmain}
    \vspace{-5pt}
\end{wrapfigure}

\looseness=-1
We report our main quantitative results in~\cref{tab:final_results} and observe that \nameshorttr{} achieves the highest IQM score ($0.91;[0.89, 0.93]$), outperforms MVP by $21.3$\% ($0.75;[0.73, 0.77]$) and QC by $5.8$\% ($0.86;[0.84, 0.87]$). The improvement is most pronounced on challenging environments: on \texttt{cube-trl-t4}, \nameshorttr{} reaches $0.88$ compared to $0.37$ for QC and $0.32$ for MVP, and on \texttt{amaze-gnt-t4}, \nameshorttr{} achieves $0.80$ compared to $0.64$ and $0.42$, respectively. We also note that QC outperforms MVP on average ($0.76$ vs.\ $0.68$), but this comes at $10\times$ computational overhead at inference due to simulation of the flow rather than one-step generation. Nevertheless, we find that \nameshorttr{} outperforms QC in $10/12$ environments using only a single generation step. These results demonstrate the benefit of leveraging the optimal $Q$-guidance in \nameshorttr{} in comparison to best-of-$N$.

\looseness=-1
\xhdr{Inference scaling}
To evaluate our complementary contribution \namelonginf{} (\nameshortinf{}) that can be used as a stochastic sampler on any flow-map policy, including the baseline method MVP. We introduce two additional configurations, MVP + \nameshortinf{} and \nameshorttr{} + \nameshortinf{}. Specifically, we replace 
the  best-of-$N$ sampling in inference time with our stochastic sampling algorithm, which is combined with beam search ($K=1$, $B=4$, $M=4$) and outlined in~\S\ref{sec:inference_steering}. Overall, applying \nameshortinf we observe a relative increase in IQM by $8.0$\% (from $0.75$ to $0.81$) for MVP and a relative increase by $2.2$\% (from $0.91$ to $0.93$) for \nameshorttr{}. 
However, for a small number of locomotion tasks (\texttt{hmaze-med-t3} and \texttt{amaze-gnt-t4}), \nameshortinf{} degrades performance by $15.9$\% and $3.8$\% respectively. Overall, we find that combining both our proposed training and inference algorithms \nameshorttr{} + \nameshortinf{} leads to the best performance, achieving the highest IQM of $0.93;[0.91, 0.94]$, with non-overlapping confidence intervals against all baselines, including QC ($0.86;[0.84, 0.87]$) and MVP ($0.75;[0.73, 0.77]$). These results highlight the impact of performance by increasing the compute budget at inference through stochastic sampling and beam search.

\begin{table*}[t]
\centering
\footnotesize
\setlength{\tabcolsep}{3pt}
\caption{\small Success rate (mean $\pm$ std over 5 seeds, 50 episodes). Best per row in $\mathbf{bold}$, second best \underline{underlined}. Aggregate performance is measured by the IQM scores with 95\% CIs.}
\label{tab:final_results}
\resizebox{1\linewidth}{!}{
\begin{tabular}{lcccccc}
\toprule
\textbf{Environment} & \textbf{QC} & \textbf{MVP}$^*$ & \textbf{MVP} & \textbf{MVP + \nameshortinf (Ours)} & \textbf{\nameshorttr (Ours)} & \textbf{ \nameshorttr + \nameshortinf (Ours)} \\
\midrule
\texttt{can} & $0.88 \pm 0.06$ & $0.92 \pm 0.07$ & $0.83 \pm 0.07$ & $0.87 \pm 0.07$ & $\underline{0.96 \pm 0.04}$ & $\mathbf{0.97 \pm 0.03}$ \\
\texttt{square} & $0.89 \pm 0.04$ & $0.93 \pm 0.01$ & $0.82 \pm 0.04$ & $0.83 \pm 0.05$ & $\underline{0.94 \pm 0.02}$ & $\mathbf{0.95 \pm 0.04}$ \\
\texttt{cube-dbl-t3} & $\mathbf{1.00 \pm 0.00}$ & $\mathbf{1.00 \pm 0.00}$ & $\mathbf{1.00 \pm 0.00}$ & $\mathbf{1.00 \pm 0.00}$ & $\mathbf{1.00 \pm 0.00}$ & $\mathbf{1.00 \pm 0.00}$ \\
\texttt{cube-dbl-t4} & $0.92 \pm 0.05$ & $0.95 \pm 0.04$ & $\underline{0.98 \pm 0.02}$ & $\underline{0.98 \pm 0.02}$ & $\underline{0.98 \pm 0.02}$ & $\mathbf{1.00 \pm 0.00}$ \\
\texttt{cube-trl-t3} & $\underline{0.83 \pm 0.08}$ & $0.71 \pm 0.06$ & $0.64 \pm 0.12$ & $0.78 \pm 0.12$ & $0.78 \pm 0.10$ & $\mathbf{0.84 \pm 0.04}$ \\
\texttt{cube-trl-t4} & $0.37 \pm 0.26$ & $0.52 \pm 0.11$ & $0.32 \pm 0.07$ & $0.37 \pm 0.09$ & $\mathbf{0.88 \pm 0.07}$ & $\underline{0.87 \pm 0.05}$ \\
\texttt{scene-t4} & $\underline{0.99 \pm 0.01}$ & --- & $0.92 \pm 0.02$ & $0.98 \pm 0.02$ & $\mathbf{1.00 \pm 0.00}$ & $\underline{0.99 \pm 0.01}$ \\
\texttt{scene-t5} & $0.96 \pm 0.02$ & --- & $0.90 \pm 0.06$ & $0.95 \pm 0.05$ & $\underline{0.98 \pm 0.02}$ & $\mathbf{1.00 \pm 0.00}$ \\
\texttt{hmaze-med-t3} & $\underline{0.65 \pm 0.11}$ & --- & $0.47 \pm 0.10$ & $0.53 \pm 0.03$ & $\mathbf{0.69 \pm 0.04}$ & $0.58 \pm 0.07$ \\
\texttt{hmaze-med-t4} & $\underline{0.04 \pm 0.03}$ & --- & $0.00 \pm 0.00$ & $0.02 \pm 0.02$ & $\mathbf{0.06 \pm 0.03}$ & $\mathbf{0.06 \pm 0.03}$ \\
\texttt{amaze-gnt-t4} & $0.64 \pm 0.12$ & --- & $0.42 \pm 0.06$ & $0.43 \pm 0.04$ & $\mathbf{0.80 \pm 0.06}$ & $\underline{0.77 \pm 0.03}$ \\
\texttt{amaze-gnt-t5} & $\underline{0.91 \pm 0.05}$ & --- & $0.82 \pm 0.08$ & $0.90 \pm 0.06$ & $\mathbf{0.92 \pm 0.04}$ & $\mathbf{0.92 \pm 0.05}$ \\
\midrule
IQM SR\;[95\% CI]& $0.86\;[0.84, 0.87]$ & --- & $0.75\;[0.73, 0.77]$ & $0.81\;[0.78, 0.83]$ & $\underline{0.91\;[0.89, 0.93]}$ & $\mathbf{0.93\;[0.91, 0.94]}$ \\
\bottomrule
\end{tabular}
}
\vspace{-15pt}
\end{table*}


\subsection{Sample Efficiency Analysis}
\label{sec:experiments_sample_efficiency}

\looseness=-1
We next investigate the sample efficiency gains of using $Q$-guidance to train our flow map policies. In~\cref{fig:training_curves_main}, we plot the training curves during the online adaptation for all methods across $6$ environments (see~\cref{fig:training_curves_app} for full). We find that \nameshorttr{} consistently converges faster than MVP during online fine-tuning, despite both methods starting from the same offline checkpoint. To quantify this advantage, we define $\xi$ as the highest success rate that MVP and \nameshort{} can reach, computed per seed and environment. In~\cref{tab:time-to-threshold-combined}, we measure speedup $T$: the number of steps to first reach $\{75\%, 85\%, 95\%, 100\%\}$ of $\xi$. The speedup ratio $T_{\text{MVP}} / T_{\nameshorttr}$, averaged over 5 seeds, quantifies how many times faster \nameshorttr{} converges to each fraction of $\xi$. In the online phase (1M--2M), \nameshorttr{} reaches the highest success rate achievable by MVP $2.77\times$ faster on average, and up to $6.14\times$ on \texttt{hmaze-med-t3}.   These results further confirm that $Q$-gradient alignment provides a stronger learning signal than best-of-$N$ selection, leading to faster policy improvement per environment step.

\subsection{Ablation Studies}
\label{sec:ablation_studies}

\begin{wraptable}{r}{0.40\linewidth}
    \centering
    \vspace{-40 pt}
    \caption{ \small \nameshortinf efficiency ablation.}
    \label{tab:iqm_fe}
    \small
    \setlength{\tabcolsep}{3pt}
    \begin{tabular}{llcc}
    \toprule
    $K$ & $\{B, M\}$ & $\mathrm{NFE}$ & IQM \\
    \midrule
    $0$ & $\{1, 32\}$ & $32$ & $0.91\;[0.89,0.93]$ \\
    \midrule
    $1$ & $\{4, 4\}$ & $20$ & $\mathbf{0.93\;[0.91,0.94]}$ \\
    $1$ & $\{2, 8\}$ & $24$ & $\mathbf{0.93\;[0.91,0.95]}$ \\
    $1$ & $\{1, 16\}$ & $32$ & $0.92\;[0.90,0.93]$ \\
    $1$ & $\{4, 16\}$ & $80$ & $0.92\;[0.90,0.93]$ \\
    \midrule
    $2$ & $\{4, 4\}$ & $36$ & $0.91\;[0.90,0.93]$ \\
    $2$ & $\{1, 16\}$ & $48$ & $0.90\;[0.89,0.92]$ \\
    $2$ & $\{4, 16\}$ & $144$ & $0.90\;[0.89,0.91]$ \\
    \bottomrule
    \end{tabular}
    \vspace{-5pt}
\end{wraptable}

\looseness=-1
\textbf{Inference-time Beam Search.} 
The computational cost of utilizing \nameshortinf{} is $\text{NFE} = M(1 + KB)$ per action selection, where $M$ is the number of initial candidates, $K$ is the number of re-noising steps, and $B$ is the number of completions per candidate. We note that best-of-$N$ sampling corresponds to $K{=}0$ and $M{=}N$. In~\cref{tab:iqm_fe}, we show that the optimal configuration ($K{=}1$, $B{=}4$, $M{=}4$) achieves a peak IQM of $0.93$ with only $20$ FE---$37.5$\% fewer than best-of-$32$---suggesting that diversifying candidates through renoising is more efficient than considering more candidates. Increasing $K$ beyond $1$ does not improve performance, suggesting that only a modest increase in inference cost is needed for optimal performance.

\begin{figure}[t]
\centering
\vspace{-10pt}
\includegraphics[width=1.0\linewidth]{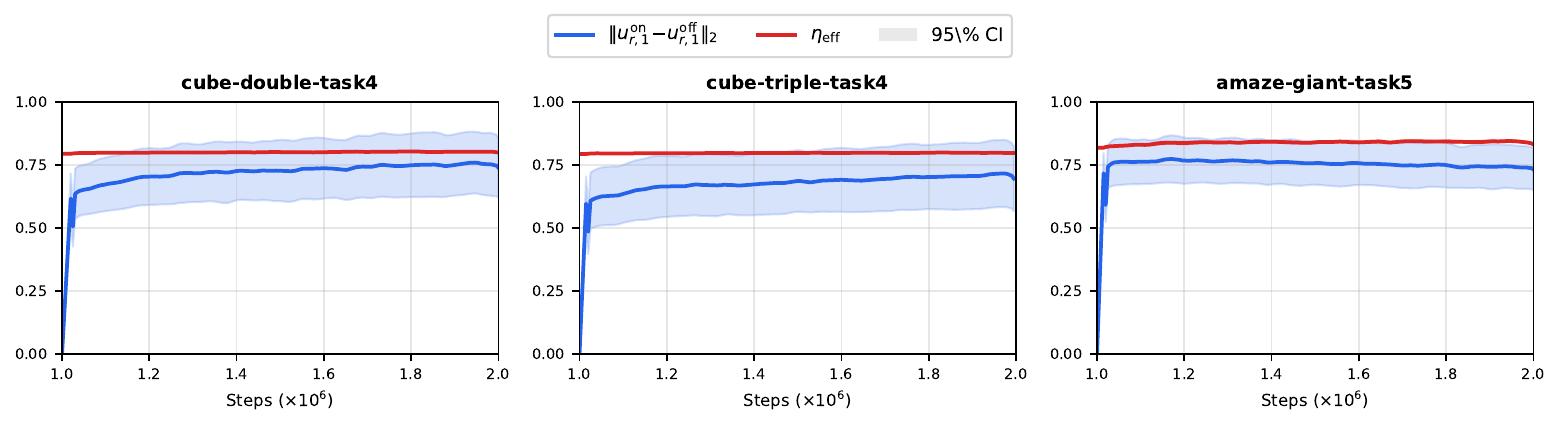}
\vspace{-25pt}
\caption{\small Convergence during online training on 3 environments. Distance between the online and offline flow map policies (blue) converges to the adaptive radius $\eta_{\mathrm{eff}}$ (red) as the policy incorporates the $Q$-guidance.}
\vspace{-15pt}
\label{fig:kkt_validation_main}
\end{figure}

\looseness=-1
\xhdr{Trust-Region Convergence Analysis} We evaluate the impact of trust-region  in~\cref{eq:fmq_f_loss}. Specifically, we measure the distance $\|u^{\text{on}}_{r,1} - u^{\text{off}}_{r,1}\|_2$ relative to the frozen offline policy. At the onset of online training, the online and offline velocity fields coincide, and the distance is $0$. As online training begins, the learning actively drives $u^{\text{on}}_{r,1}$ toward $u^{\text{off}}_{r,1} + \eta \nabla_a Q_\phi(s, a_1)/\|\nabla_a Q_\phi(s, a_1)\|_2$---as evidenced in~\cref{fig:kkt_validation_main}---and stabilizes near $\eta_{\text{eff}}$. Thus, the flow map policy incorporates the normalized $Q$-gradient direction while remaining safely constrained within the trust-region radius (c.f.~\cref{fig:kkt_validation_app} for all environments).

\looseness=-1
\xhdr{Flow map policy variants} 
We next ablate the offline flow map policy variants and their impact on performance. In~\cref{tab:ablation_sd_loss}. ESD and LSD achieve the same IQM of $0.79$, while PSD lags slightly at $0.77$. 
However, ESD exhibits the narrowest $95\%$ bootstrap CI $[0.77, 0.81]$, indicating it is the most consistent across environments. We therefore adopt ESD as the default formulation for our presented results.

\section{Related work}
\label{sec:related_work}

\looseness=-1
\xhdr{Generative Policies}
Diffusion models and flow matching have emerged as expressive policy representations~\citep{chi2023diffusion, pearce2023imitating}. For policy learning, prior methods train diffusion and flow-matching models via weighted behavioral cloning~\citep{lu2023qgpo, kang2023edp}, reparameterized policy gradients~\citep{wang2022diffusionql, ding2024consistencyac, zhang2024eql}, and rejection sampling~\citep{chen2023sfbc, hansen2023idql, he2024aligniql}. While effective, reparameterized gradients require costly backpropagation through time (BPTT). To address the latency of flow-matching multi-step models, FQL~\citep{fql_park2025} distills a multi-step flow into a separate one-step student network, while QC~\citep{li2025reinforcement} groups action sequences. MVP~\citep{zhan2026mean} natively achieves one-step generation but uses a ``generate-and-select'' heuristic to find imitation targets. In contrast, we formalize one-step flow map policies and 
leverage \nameshort for more efficient online adaptation.

\begin{wraptable}{r}{0.52\linewidth}
    \centering
    \vspace{-20pt}
    \caption{Self-distillation loss ablation.}
    \label{tab:ablation_sd_loss}
    \scriptsize
    \setlength{\tabcolsep}{2pt}
    \begin{tabular}{lccc}
    \toprule
    \textbf{Environment} & \textbf{ESD} & \textbf{PSD} & \textbf{LSD} \\
\midrule
\texttt{cube-trl-t3} & $0.79 \pm 0.06$ & $0.74 \pm 0.13$ & $\mathbf{0.84 \pm 0.09}$ \\
\texttt{cube-trl-t4} & $\mathbf{0.90 \pm 0.05}$ & $0.87 \pm 0.09$ & $0.88 \pm 0.06$ \\
\texttt{hmaze-med-t3}   & $\mathbf{0.64 \pm 0.02}$ & $0.48 \pm 0.29$ & $0.62 \pm 0.09$ \\
\texttt{hmaze-med-t4}   & $0.03 \pm 0.03$ & $0.04 \pm 0.04$ & $\mathbf{0.06 \pm 0.00}$ \\
\texttt{amaze-gnt-t4} & $0.82 \pm 0.02$ & $\mathbf{0.86 \pm 0.04}$ & $0.78 \pm 0.06$ \\
\texttt{amaze-gnt-t5} & $\mathbf{0.92 \pm 0.01}$ & $0.87 \pm 0.05$ & $0.91 \pm 0.03$ \\
\midrule
IQM     & $\mathbf{0.79}$\;[0.77, 0.81] & $0.77$\;[0.70, 0.81]  & $\mathbf{0.79}$\;[0.75, 0.82] \\
\bottomrule
\end{tabular}
\vspace{-10pt}
\end{wraptable} 

\looseness=-1
\textbf{Offline-to-Online RL.} 
Offline-to-online RL accelerates online learning by initializing from a static dataset~\citep{levine2020offline}. However, offline RL must contend with overestimation of $Q$-values for out-of-distribution actions, addressed through divergence penalties~\citep{fujimoto2019bcq, wu2019brac, nair2020awac, wang2022diffusionql}, pessimistic value estimates~\citep{kumar2020conservative, yu2020mopo, an2021uncertainty}, or in-sample maximization~\citep{kostrikov2021iql, garg2023xql}. When transitioning to the online phase, distribution shift can cause catastrophic forgetting of the behavioral prior~\citep{lee2021offline, song2023hybrid, nakamoto2023calql}. Recent state-of-the-art flow-matching methods rely on behavioral regularization~\citep{fql_park2025} or best-of-$N$ selection~\citep{li2025reinforcement, zhan2026mean} to stabilize adaptation. In contrast, we formulate online fine-tuning as a trust-region problem with \nameshort. 

\section{Conclusion}
\label{sec:conclusion}

\looseness=-1
In this paper, we bridge the gap between expressive generative policies and the low-latency requirements of offline-to-online RL. We formulate online adaptation of one-step flow map policies as a trust-region optimization problem, yielding \namelong{} (\nameshort{}): a closed-form, locally optimal $Q$-guided update that improves a linearized critic while remaining anchored to the offline behavioral prior. We further introduce \nameshortinf{}, an inference-time refinement procedure based on stochastic renoising and beam search that improves any flow map policy. Across $12$ continuous-control tasks from OGBench and RoboMimic, \nameshort{} establishes state-of-the-art performance, outperforming the previous leading one-step policy, MVP, by a $21.3\%$ relative margin in IQM success. While \nameshort enjoys an efficient linearized critic approximation, its effectiveness depends on critic accuracy and local linearity. Extending flow map adaptation with curvature-aware updates, stronger uncertainty estimates, and deployment on physical robotic platforms are promising directions for future work.

\cut{
In this paper, we bridge the gap between the expressiveness of generative policies and the inference latency requirements of offline-to-online RL. While prior one-step generative methods rely on best-of-$N$ heuristics, we formalized online adaptation as a trust-region-constrained optimization problem. This perspective allowed us to derive \namelong{} (\nameshort{}), an analytically optimal, closed-form steering step that maximizes a linearized Q-function while remaining safely anchored to the offline behavioral prior.  Building on this theoretical foundation, we proposed two orthogonal mechanisms: a training-time fine-tuning algorithm (\textsc{FMQ-FT}) that dynamically scales the trust-region radius using the critic's epistemic uncertainty to prevent catastrophic forgetting, and an inference-time search algorithm (\nameshortinf{}, or \textsc{Q-RBS}) that leverages stochastic renoising and beam search for robust action refinement. Empirically, our framework establishes a new state-of-the-art across 12 challenging continuous control tasks from OGBench and RoboMimic. Notably, \nameshort{} significantly accelerates sample efficiency during online fine-tuning and outperforms the previous leading one-step policy (MVP) by a $21.3\%$ relative margin in IQM success. While \nameshort{} provides an exact, mathematically optimal local Q-gradient step, its efficacy inherently relies on the accuracy of the learned critic and its first-order Taylor approximation. In environments with highly non-linear reward landscapes, this linear approximation may not be.  Furthermore, while our uncertainty-aware radius effectively prevents policy collapse, exploring principled ways to incorporate second-order Hessian information or learned local curvature metrics into the flow map update could safely enable larger optimal jumps. Finally, deploying flow map policies on physical robotic platforms is an interesting future direction.}

\section*{Acknowledgments}

We thank Jorge de Freitas for their helpful comments on an earlier draft and Luca Eyring for the insightful discussion on renoising. This work was supported by UKRI (EP/Y037111/1) as part of the ProSafe project (EU Horizon 2020, MSCA, grant no.\ 101119358).

\bibliographystyle{abbrvnat}
\bibliography{bibliography}
\newpage


\appendix

\section{Theoretical details}
\label{app:theory}

\subsection{Proofs}
\label{app:proofs}

\begin{mdframed}[style=MyFrame2]
\theoremone*
\end{mdframed}

\begin{proof}

We seek the optimal average velocity $u_{r,1}(a_r \mid s)$ that generates action
\begin{equation}
\bar{a}_1 = a_r + (1-r)\, u_{r,1}(a_r \mid s).
\end{equation}

Let $u_{r,1}(a_r \mid s)$ denote a candidate average velocity, generating action $\bar{a}_1= a_r + (1-r) u_{r,1}(a_r \mid s)$. We take the first-order Taylor expansion of the Q-function around the reference action $a_1$:
\begin{equation}\label{eq:taylor_expansion}
Q_\phi(s, \bar{a}_1) \approx Q_\phi(s, a_1) + \langle \nabla_a Q_\phi(s,\, a_1),\, \bar{a}_1 - a_1 \rangle
\end{equation}
Substituting the flow map parameterization, the starting state $a_r$ cancels out, leading to the difference in average velocity vectors:
\begin{equation}
\bar{a}_1 - a_1 = \bigl(a_r + (1-r)u_{r,1}(a_r \mid s)\bigr) - \bigl(a_r + (1-r)u^{\text{ref}}_{r,1}(a_r \mid s) \bigr) = (1-r)(u_{r,1}(a_r \mid s) - u^{\text{ref}}_{r,1}(a_r \mid s))
\end{equation}
Since $Q_\phi(s, a_1)$ is constant with respect to $u_{r,1}(a_r \mid s)$, maximizing the linear approximation subject to the trust-region constraint on the average velocity is formulated as:
\begin{equation}\label{eq:constrained_opt}
\begin{aligned}
\min_{u_{r,1}} \quad & f_0(u_{r,1}(a_r \mid s)) = -\langle \nabla_a Q_\phi(s,\, a_1),\, u_{r,1}(a_r \mid s) - u^{\text{ref}}_{r,1}(a_r \mid s) \rangle \\
\text{subject to} \quad & f_1(u_{r,1}(a_r \mid s)) = \tfrac{1}{2}\| u_{r,1}(a_r \mid s) - u^{\text{ref}}_{r,1}(a_r \mid s) \|_2^2 - \tfrac{1}{2}\eta^2 \le 0
\end{aligned}
\end{equation}
Because $\eta > 0$, the interior of the feasible set is non-empty, satisfying Slater's constraint qualification. Therefore, strong duality holds and the KKT conditions are necessary and sufficient. The Lagrangian is:
\begin{equation}
\begin{split}
\mathcal{L}(u_{r,1}, \lambda) &= -\langle \nabla_a Q_\phi(s,\, a_1),\, u_{r,1}(a_r \mid s) - u^{\text{ref}}_{r,1}(a_r \mid s) \rangle \\
&\quad + \lambda \left( \tfrac{1}{2}\| u_{r,1}(a_r \mid s) - u^{\text{ref}}_{r,1}(a_r \mid s) \|_2^2 - \tfrac{1}{2}\eta^2 \right)
\end{split}
\end{equation}

Let $u^*_{r,1}(a_r \mid s)$ and $\lambda^*$ be the primal and dual optima. The KKT conditions are:
\begin{align}
    f_1(u^*_{r,1}(a_r \mid s)) &\le 0 &&\text{(Primal feasibility)} \label{eq:kkt_primal} \\
    \lambda^* &\ge 0 &&\text{(Dual feasibility)} \label{eq:kkt_dual} \\
    \lambda^* f_1(u^*_{r,1}(a_r \mid s)) &= 0 &&\text{(Complementary slackness)} \label{eq:kkt_slackness} \\
    -\nabla_a Q_\phi(s,\, a_1) + \lambda^* (u^*_{r,1}(a_r \mid s) - u^{\text{ref}}_{r,1}(a_r \mid s)) &= 0 &&\text{(Stationarity)} \label{eq:kkt_stationarity_new}
\end{align}
Assuming $\nabla_a Q_\phi(s,\, a_1) \neq 0$, stationarity~\eqref{eq:kkt_stationarity_new} requires $\lambda^* > 0$. Complementary slackness~\eqref{eq:kkt_slackness} then forces the constraint to be active:
\begin{equation}\label{eq:active_constraint}
\| u^*_{r,1}(a_r \mid s) - u^{\text{ref}}_{r,1}(a_r \mid s) \|_2 = \eta
\end{equation}
Taking the norm of the stationarity condition gives $\lambda^* \eta = \|\nabla_a Q_\phi(s,\, a_1)\|_2$, so $\lambda^* = \|\nabla_a Q_\phi(s,\, a_1)\|_2 / \eta$. Substituting $\lambda^*$ back into the stationarity condition yields:
\begin{equation}\label{eq:final_proof}
u^*_{r,1}(a_r \mid s) = u^{\text{ref}}_{r,1}(a_r \mid s) + \eta \frac{\nabla_a Q_\phi(s, a_1)}{\|\nabla_a Q_\phi(s, a_1)\|_2}
\end{equation}
\end{proof}

\subsection{Equivalence of Eulerian and Mean Flow Policies}
\label{app:equivalence}

\looseness=-1
In this section, we elucidate the equivalence between Mean Flow Policies~\citep{zhan2026mean} and the Eulerian Policy in~\cref{eqn:esd}. 
We begin by stating the Mean Flow Policy and its loss gradient with respect to parameters $\theta$.
    \begin{align}
    \gL_{\text{MF}} &= \E \left[\left|u^{\theta}_{r,t}(a_r \mid s) - \texttt{sg}\left(u^{\theta}_{r,t}(a_r \mid s) + (t-r) \frac{\ddd}{\ddd r} u^{\theta}_{r,t}(a_r \mid s) \right)\right|^2\right]  \\
    \nabla_{\theta}\gL_{\text{MF}} &=  2\E \left[ \nabla_{\theta} u^{\theta}_{r,t}(a_r \mid s)^{T} \left( u^{\theta}_{r,t}(a_r \mid s) - \texttt{sg}\left(u^{\theta}_{r,t}(a_r \mid s) + (t-r) \frac{\ddd}{\ddd r} u^{\theta}_{r,t}(a_r \mid s) \right)\right)\right], \nonumber
    \end{align}
where the expectation is taken with respect to $(r,t, p_r(a_r \mid z, s))$.
Now let us recall the Eulerian objective with a flow map policy parametrization $X_{r,t}(a_r \mid s) = a_r + (t-r) u^{\theta}_{r,t}(a_r \mid s)$ with explicit parameters $\theta$ for the average velocity:
    \begin{equation}
       \gL_{\text{EPD}}(\theta) =  \E \left[\left|\partial_r X_{r, t}(a_r \mid s) + \texttt{sg}\left(\nabla X_{r, t}(a_r \mid s)u^{\theta}_{r, r}(a_r \mid s) \right)\right|^2\right],
    \end{equation}
\looseness=-1
Let us examine the terms inside the squared norm and remove the stop-gradient operator $\texttt{sg}$. We compute the partial derivative with respect to the start time $r$:
\begin{equation}
    \partial_r X_{r,t}(a_r \mid s) = -u^{\theta}_{r,t}(a_r \mid s) + (t-r) \partial_r u^{\theta}_{r,t}(a_r \mid s).
    \label{eq:partial_time} 
\end{equation}
\looseness=-1
Plugging this back into the Eulerian objective, we have,
    \begin{equation}
    \gL =  \E\left[\left| \underbrace{-u^{\theta}_{r,t}(a_r \mid s)}_{T_1} + \underbrace{(t-r) \partial_r u^{\theta}_{r,t}(a_r \mid s) + \nabla X_{r, t}(a_r \mid s)u^{\theta}_{r, r}(a_r \mid s)}_{T_2}\right|^2\right].
    \end{equation}
\looseness=-1
Applying a stop-gradient to $T_2$ and taking parameter gradients, 
Plugging this back into the Eulerian objective, we have,
    \begin{equation}
    \nabla_{\theta}\gL(\theta) =  2\E  \left[ u^{\theta}_{r,t}(a_r \mid s)  -\nabla_{\theta} u^{\theta}_{r,t}(a_r \mid s) \cdot\left( u^{\theta}_{r,t}(a_r \mid s) + \texttt{sg}\left(T_2\right)\right)\right].
    \label{eq:esd_loss_grad}
    \end{equation}
\looseness=-1
Now expanding the spatial gradient term in $T_2$, that is $\nabla X_{r,t}(a_r \mid s) u^{\theta}_{r, r}(a_r \mid s)$:
\begin{align}
    \nabla X_{r,t}(a_r \mid s)u^{\theta}_{r, r}(a_r \mid s) = u^{\theta}_{r, r}(a_r \mid s) + (t-r) \nabla u^{\theta}_{r,t}(a_r \mid s)u^{\theta}_{r, r}(a_r \mid s). \label{eq:spatial_grad}
\end{align}
\looseness=-1
Now by invoking the tangent condition and replacing $u^{\theta}_{r,r}$ with the ground truth instantaneous velocity $v^{*}$ we can expand $T_2$ have
    \begin{equation*}
   T_2 = (t-r) \partial_r u^{\theta}_{r,t}(a_r \mid s) + v^*_{r}(a_r \mid s) + (t-r) \nabla u^{\theta}_{r,t}(a_r \mid s)v^*_r(a_r \mid s).
    \end{equation*}
\looseness=-1
Rearranging terms and grouping $(t-r)$ terms, we notice the total derivative $\ddd / \ddd r$ corresponds exactly to $T_2$. We now leverage and rewrite~\cref{eq:esd_loss_grad} succinctly:
    \begin{equation*}
    \nabla_{\theta}\gL(\theta) =  2\E\left[\nabla_{\theta} u^{\theta}_{r,t}(a_r \mid s)^T \left(u^{\theta}_{r,t}(a_r \mid s)- \texttt{sg}\left(u^{\theta}_{r,t}(a_r \mid s) + (t-r) \frac{\ddd}{\ddd r} u_{r,t}(a_r \mid s) \right) \right)\right].
    \label{eq:esd_loss_grad_total_derivative}
    \end{equation*}

\looseness=-1
This loss gradient matches the Mean Flow Policies' loss gradient, with the main distinction being the usage of the ground truth velocity $v^*_r$ as opposed to the network's prediction $u^{\theta}_{r,r}$. Furthermore, the instantaneous velocity constraint is equivalent to the diagonal loss of~\cref{eq:diag_loss}. This demonstrates that Mean Flow policies~\citep{zhan2026mean} are not an independent paradigm, but rather a specific instantiation of the broader Eulerian Policy Distillation framework.

\newpage
\section{Successful Rollouts}
\label{app:env_viz}

We visualize successful rollouts from the trained \nameshorttr{} policy across all 12 evaluation environments. Each figure shows uniformly-spaced frames from a single episode that achieves the task goal.


\begin{figure}[htbp!]
    \centering
    \includegraphics[width=0.90\textwidth]{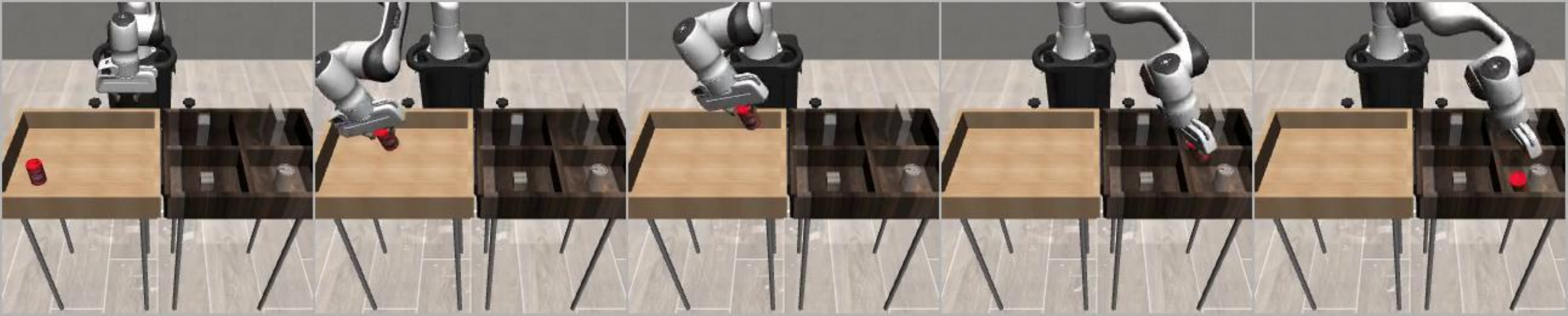}
    \caption{\texttt{can} (Robomimic). Pick up a can from the table and place it into the bin.}
    \label{fig:rollout-can}
\end{figure}
\looseness=-1
\begin{figure}[htbp!]
    \centering
    \includegraphics[width=0.90\textwidth]{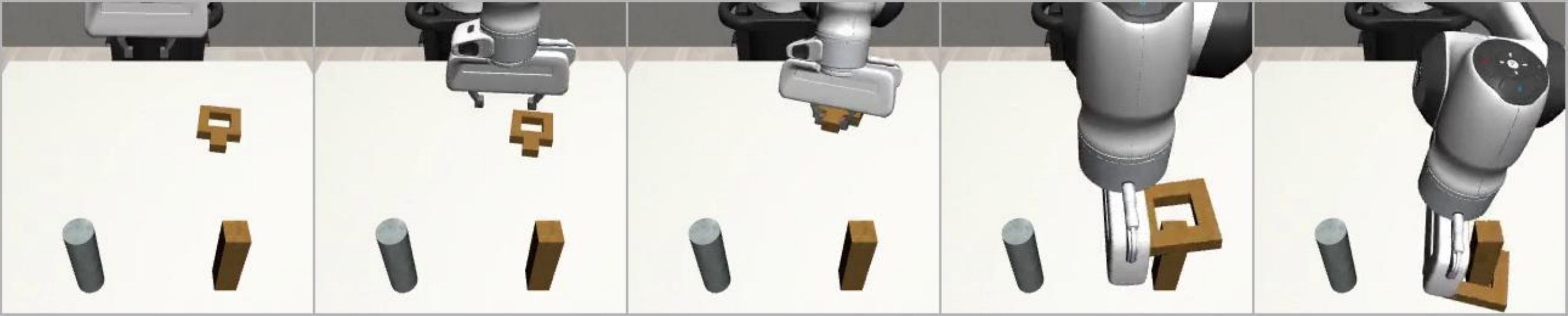}
    \caption{\texttt{square} (Robomimic). Pick up a square nut and fit it onto a peg.}
    \label{fig:rollout-square}
\end{figure}
\looseness=-1
\begin{figure}[htbp!]
    \centering
    \includegraphics[width=0.90\textwidth]{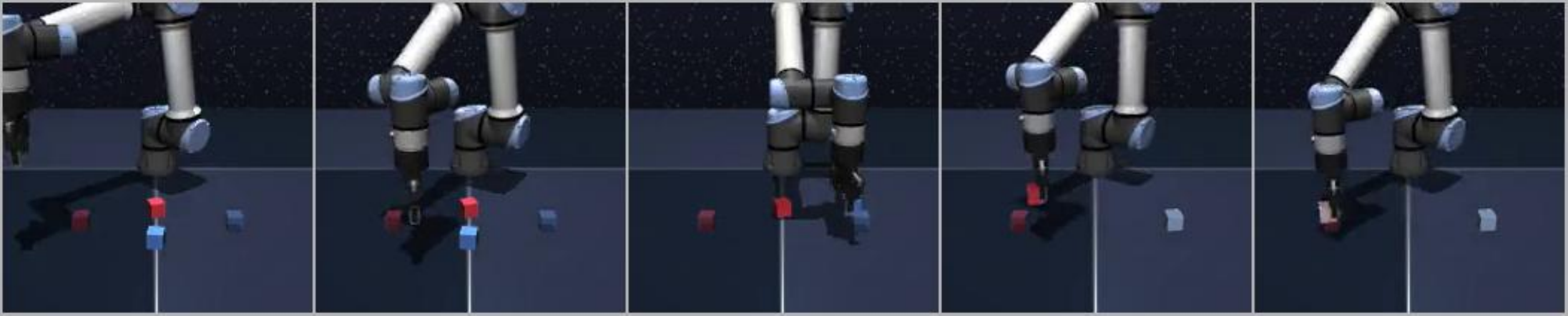}
    \caption{\texttt{cube-double-task3} (OGBench). Rearrange 2 cubes to target positions.}
    \label{fig:rollout-cdp3}
\end{figure}
\looseness=-1
\begin{figure}[htbp!]
    \centering
    \includegraphics[width=0.90\textwidth]{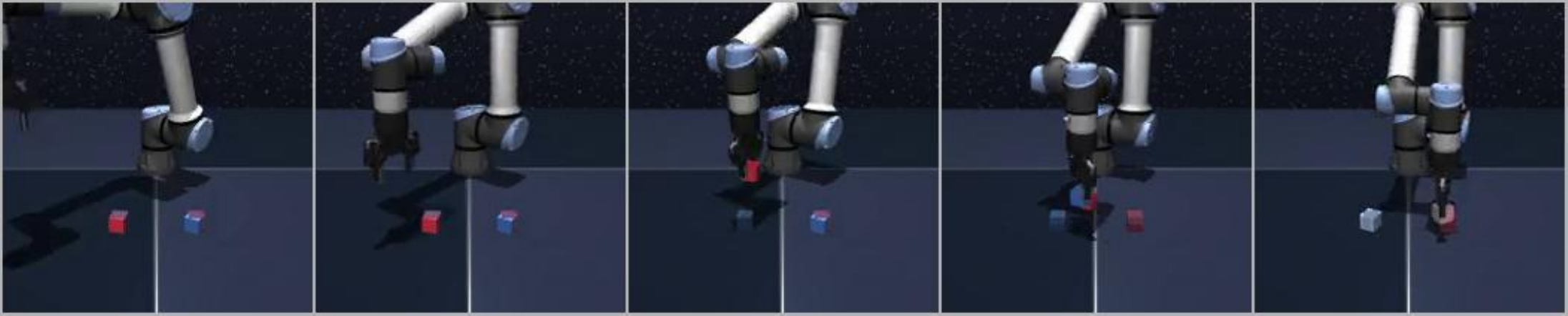}
    \caption{\texttt{cube-double-task4} (OGBench). Swap the positions of 2 cubes.}
    \label{fig:rollout-cdp4}
\end{figure}
\looseness=-1
\begin{figure}[htbp!]
    \centering
    \includegraphics[width=0.90\textwidth]{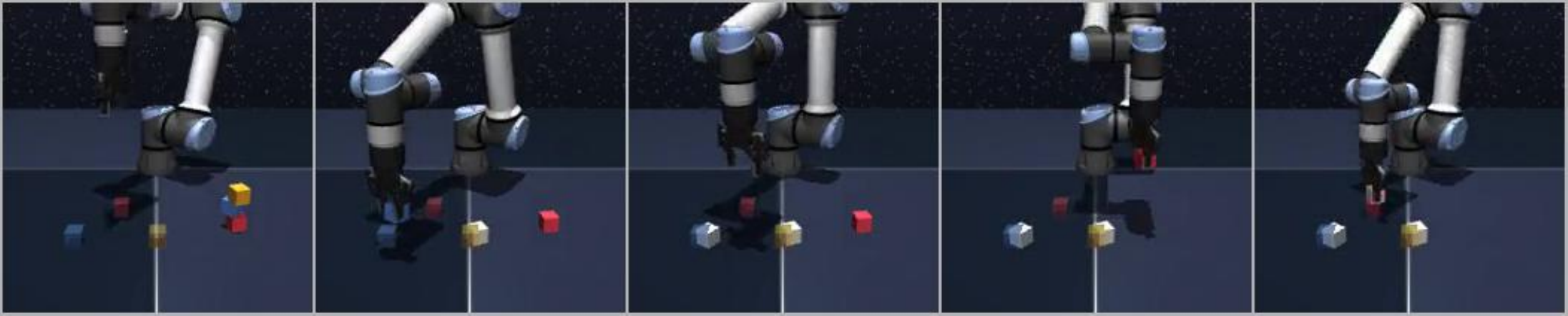}
    \caption{\texttt{cube-triple-task3} (OGBench). Unstack 3 cubes and place them at separate target positions.}
    \label{fig:rollout-ctrp3}
\end{figure}
\looseness=-1
\begin{figure}[htbp!]
    \centering
    \includegraphics[width=0.90\textwidth]{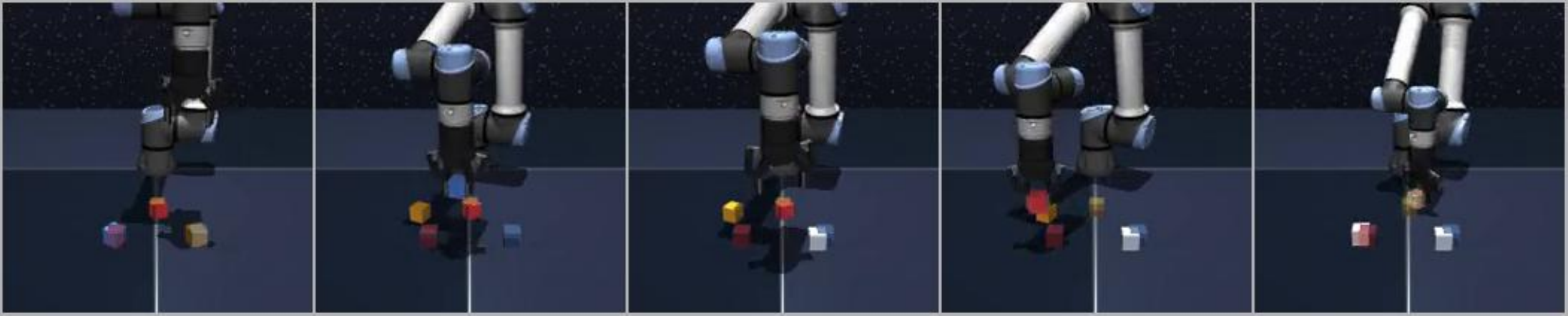}
    \caption{\texttt{cube-triple-task4} (OGBench). Cyclically permute 3 cubes to new positions.}
    \label{fig:rollout-ctrp4}
\end{figure}
\looseness=-1
\begin{figure}[htbp!]
    \centering
    \includegraphics[width=0.90\textwidth]{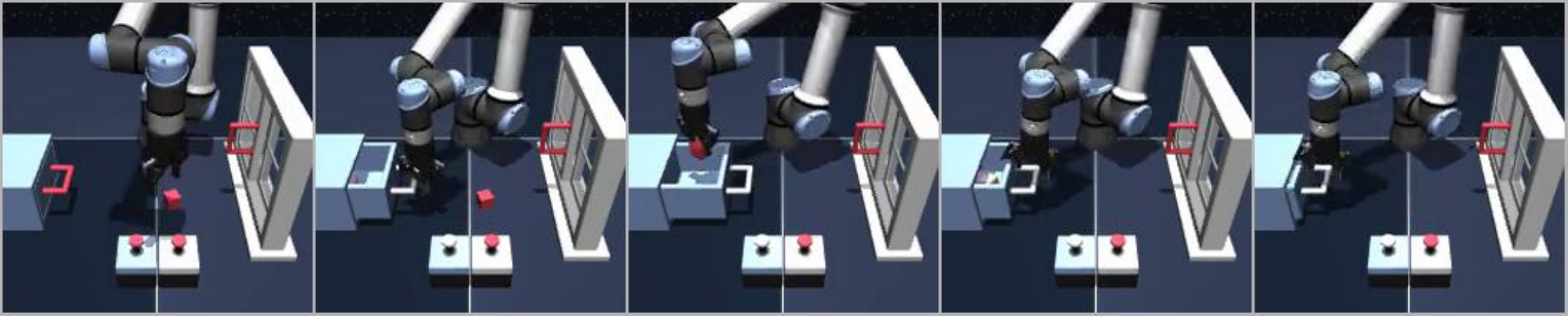}
    \caption{\texttt{scene-task4} (OGBench). Unlock the drawer button, open the drawer, and place the cube inside.}
    \label{fig:rollout-sc4}
\end{figure}
\looseness=-1
\begin{figure}[htbp!]
    \centering
    \includegraphics[width=0.90\textwidth]{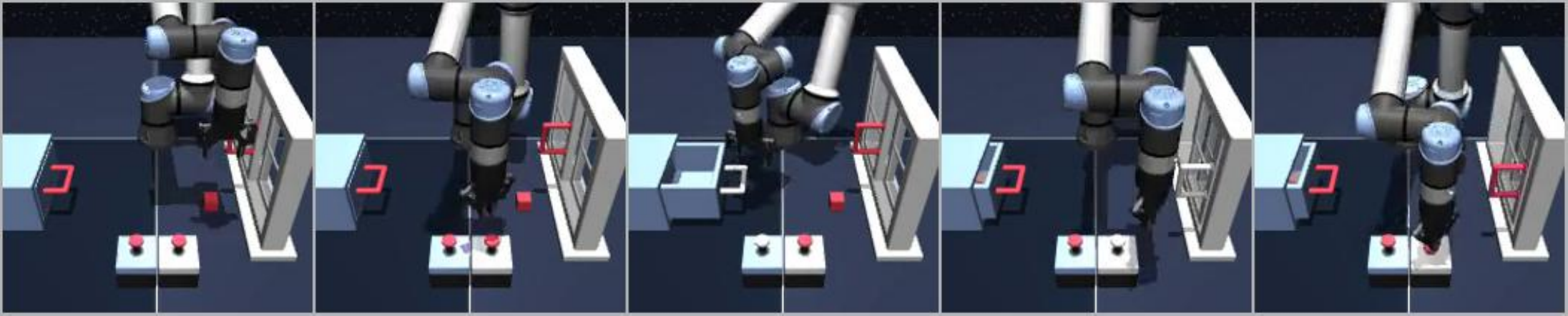}
    \caption{\texttt{scene-task5} (OGBench). Place the cube in the drawer and open the window.}
    \label{fig:rollout-sc5}
\end{figure}
\looseness=-1
\begin{figure}[htbp!]
    \centering
    \includegraphics[width=0.90\textwidth]{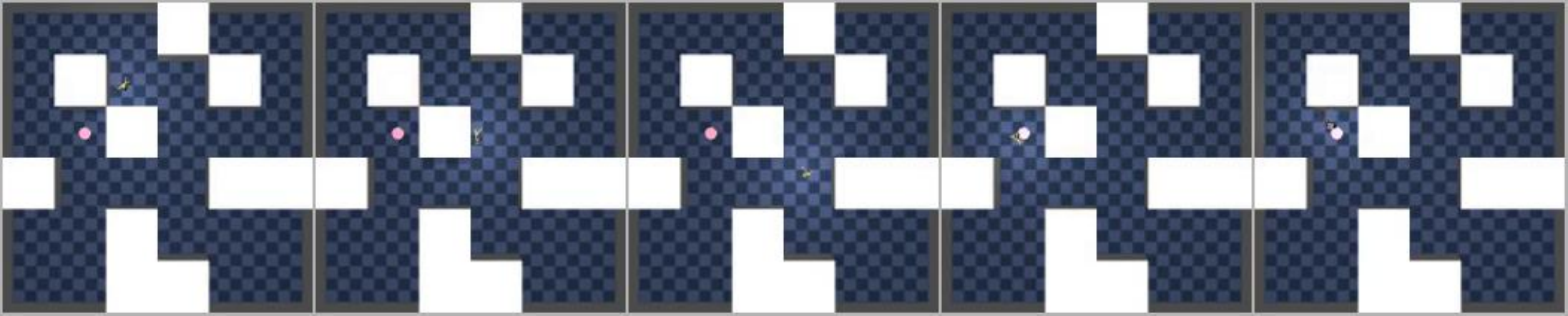}
    \caption{\texttt{humanoidmaze-medium-task3} (OGBench). Navigate a humanoid to a goal in a medium maze.}
    \label{fig:rollout-hm3}
\end{figure}
\looseness=-1
\begin{figure}[htbp!]
    \centering
    \includegraphics[width=0.90\textwidth]{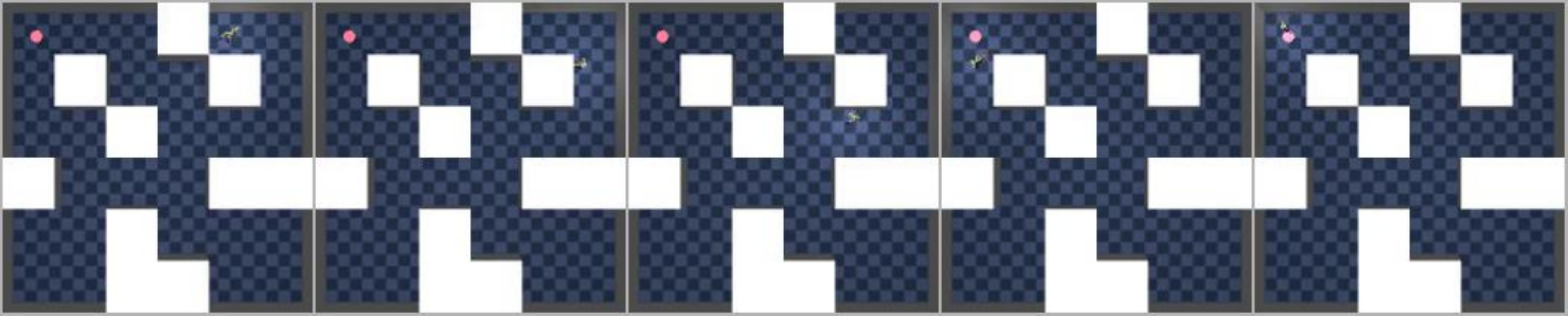}
    \caption{\texttt{humanoidmaze-medium-task4} (OGBench). Navigate a humanoid to a distant goal in a medium maze.}
    \label{fig:rollout-hm4}
\end{figure}
\looseness=-1
\begin{figure}[htbp!]
    \centering
    \includegraphics[width=0.90\textwidth]{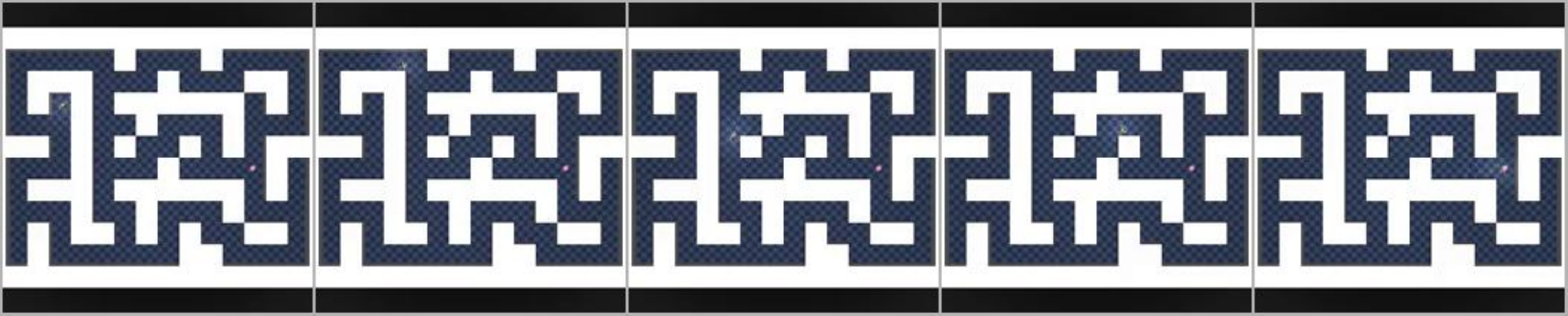}
    \caption{\texttt{antmaze-giant-task4} (OGBench). Navigate an ant to a goal across a giant maze.}
    \label{fig:rollout-ag4}
\end{figure}
\looseness=-1
\begin{figure}[htbp!]
    \centering
    \includegraphics[width=0.90\textwidth]{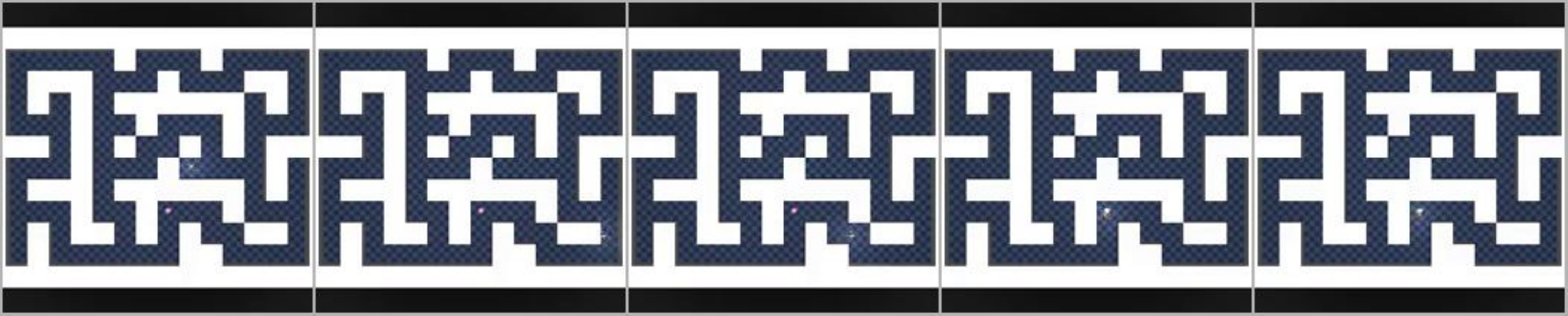}
    \caption{\texttt{antmaze-giant-task5} (OGBench). Navigate an ant to a nearby goal in a giant maze.}
    \label{fig:rollout-ag5}
\end{figure}
\looseness=-1

\newpage
\section{Algorithms}
\looseness=-1
\begin{algorithm}[h]
\caption{\namelong (\nameshorttr) }
\label{alg:fmq_training}
\begin{algorithmic}[1]
\REQUIRE Offline policy $u^{\text{off}}_{r,1}$, online policy $u^\theta_{r,1}$, critics $Q_{\phi_1}, Q_{\phi_2}$, buffer $\mathcal{D}$
\FOR{each environment step}
    \STATE $a_1 \gets a_0 + u^\theta_{0,1}(a_0|s)$, $a_0 \sim \mathcal{N}(0,I)$ 
    \STATE $\mathcal{D} \gets \mathcal{D} \cup \{(s, a_1, r, s')\}$
    \STATE Sample batch from $\mathcal{D}$; update critics via Eq.~\ref{eq:critic_loss}
    \STATE $r \sim \mathcal{U}[0,1)$; $a_0 \sim \mathcal{N}(0,I)$; $a_r \gets (1{-}r)a_0 + r\,a_{\text{data}}$
    \STATE $a_1 \gets a_r + (1{-}r)\,u^{\text{off}}_{r,1}(a_r|s)$
    \STATE $g \gets \nabla_a Q_{\phi_1}(s,a_1) / (\|\nabla_a Q_{\phi_1}(s,a_1)\|_2 + \kappa_1)$
    \STATE $\eta_{\text{eff}} \gets \eta / (1 + \beta\,\tilde{\delta}_{\text{critic}})$ \hfill $\triangleright$ Eq.~\ref{eq:eta_eff}
    \STATE $\theta \gets \theta - \alpha\,\nabla_\theta\|u^\theta_{r,1}(a_r|s) - \mathrm{sg}(u^{\text{off}}_{r,1}(a_r|s) + \eta_{\text{eff}}\,g)\|^2$
\ENDFOR
\end{algorithmic}
\end{algorithm}
\looseness=-1

\begin{algorithm}[h]
\caption{\namelonginf (\nameshortinf)}
\label{alg:diamond-steer}
\begin{algorithmic}[1]
\REQUIRE Flow map $X^\theta_{r,1}$, critic $Q_\phi$, state $s$, beam $M$, steps $K$, branches $B$, SNR $\rho$, step size $\eta$
\STATE $t' \gets \rho/(1{+}\rho)$
\STATE Sample $\{a_0^m\}_{m=1}^M \sim \mathcal{N}(0,I)$; \; $a_1^m \gets a_0^m + u^\theta_{0,1}(a_0^m|s)$ for all $m$
\FOR{$k = 1, \ldots, K$}
    \FOR{$m = 1, \ldots, M$ and $b = 1, \ldots, B$}
        \STATE $\varepsilon^{mb} \sim \mathcal{N}(0,I)$
        \STATE $\hat{a}_1^{mb} \gets X^\theta_{t',1}\!\left(t'\,a_1^m + (1{-}t')\,\varepsilon^{mb} \mid s\right)$ \hfill $\triangleright$ Re-noise \& complete
    \ENDFOR
    \STATE $\{a_1^m\}_{m=1}^M \gets \mathrm{Top\text{-}}M\!\left(\{\hat{a}_1^{mb}\}_{m,b};\; Q_\phi(s, \hat{a}_1^{mb})\right)$ \hfill $\triangleright$ Select best $M$ of $M{\cdot}B$
    \STATE $a_1^m \gets a_1^m + \eta\,\nabla_a Q_\phi(s, a_1^m) / \|\nabla_a Q_\phi(s, a_1^m)\|_2$ for all $m$ \hfill $\triangleright$ Thm.~\ref{thm:trust_region}
\ENDFOR
\STATE \textbf{return} $a_1^{\arg\max_m Q_\phi(s, a_1^m)}$
\end{algorithmic}
\end{algorithm}

\newpage

\newpage
\section{Training Curves}
\label{app:training_curves}

\Cref{fig:training_curves_app} extends~\cref{fig:training_curves_main} to all 12 environments. On the simpler manipulation tasks (\texttt{can}, \texttt{square}, \texttt{cube-dbl}), all methods converge to near-perfect success, but \nameshorttr{} reaches this level earlier. The advantage becomes more pronounced on the harder tasks: on \texttt{cube-trl-t4}, \nameshorttr{} reaches $0.88$ while MVP plateaus at $0.32$; on \texttt{amaze-gnt-t4}, \nameshorttr{} achieves $0.80$ versus $0.42$ for MVP. For locomotion (\texttt{hmaze}, \texttt{amaze}), the Q-gradient signal is particularly beneficial under sparse rewards, where best-of-$N$ selection provides a weaker learning signal.

\begin{figure}[t]
\centering
\includegraphics[width=1.0\linewidth]{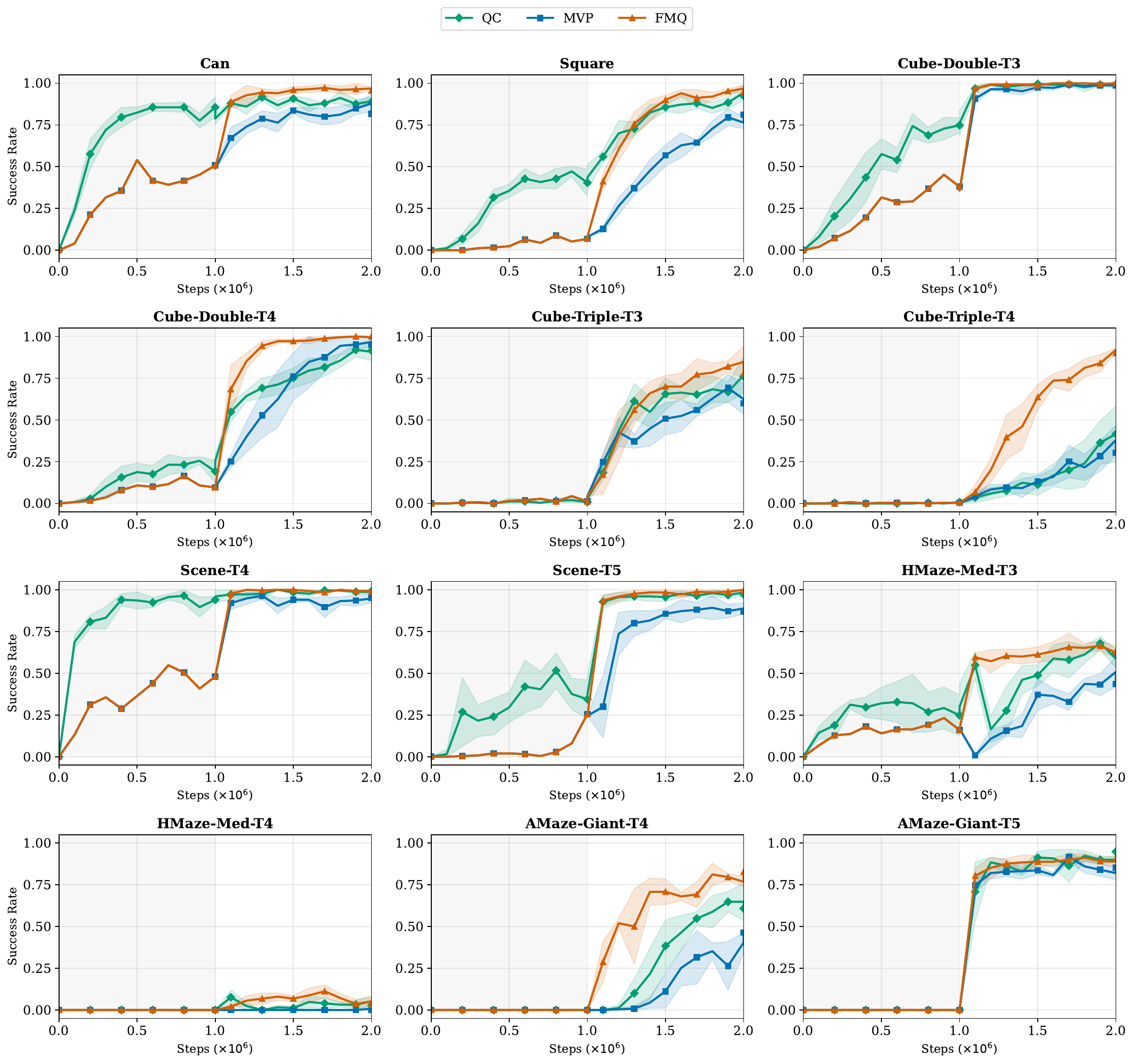}
\caption{Offline-to-online learning curves for QC, MVP, and \nameshorttr{} on all environments. All methods perform 1M offline followed by 1M online steps. Shaded regions indicate 95\% CIs over 5 seeds.}
\label{fig:training_curves_app}
\end{figure}

\section{Trust-Region Convergence}
\label{app:convergence}

\looseness=-1
\Cref{fig:kkt_validation_app} extends the convergence analysis of~\cref{fig:kkt_validation_main} to all 12 environments. We track the action displacement $\|u^{\text{on}}_{r,1} - u^{\text{off}}_{r,1}\|_2$ between the online and frozen offline flow map policies throughout online training. At the onset of fine-tuning ($1\text{M}$ steps), both policies coincide and the displacement is near zero. As training progresses, the trust-region loss in~\cref{eq:fmq_f_loss} drives $u^{\text{on}}_{r,1}$ toward $u^{\text{off}}_{r,1} + \eta_{\mathrm{eff}}\,\hat{g}$, causing the displacement to grow monotonically until it stabilizes near $\eta_{\mathrm{eff}}$. The orange curve (right axis) shows the implied Q-uncertainty $\tilde{\sigma}_Q = (\eta_{\mathrm{eff}}^{-1} - 1)/\beta$, which decreases as the critic becomes more confident---automatically tightening the trust region and confirming that the adaptive mechanism prevents overshooting in low-confidence regions.

\begin{figure}[t]
\centering
\includegraphics[width=1.0\linewidth]{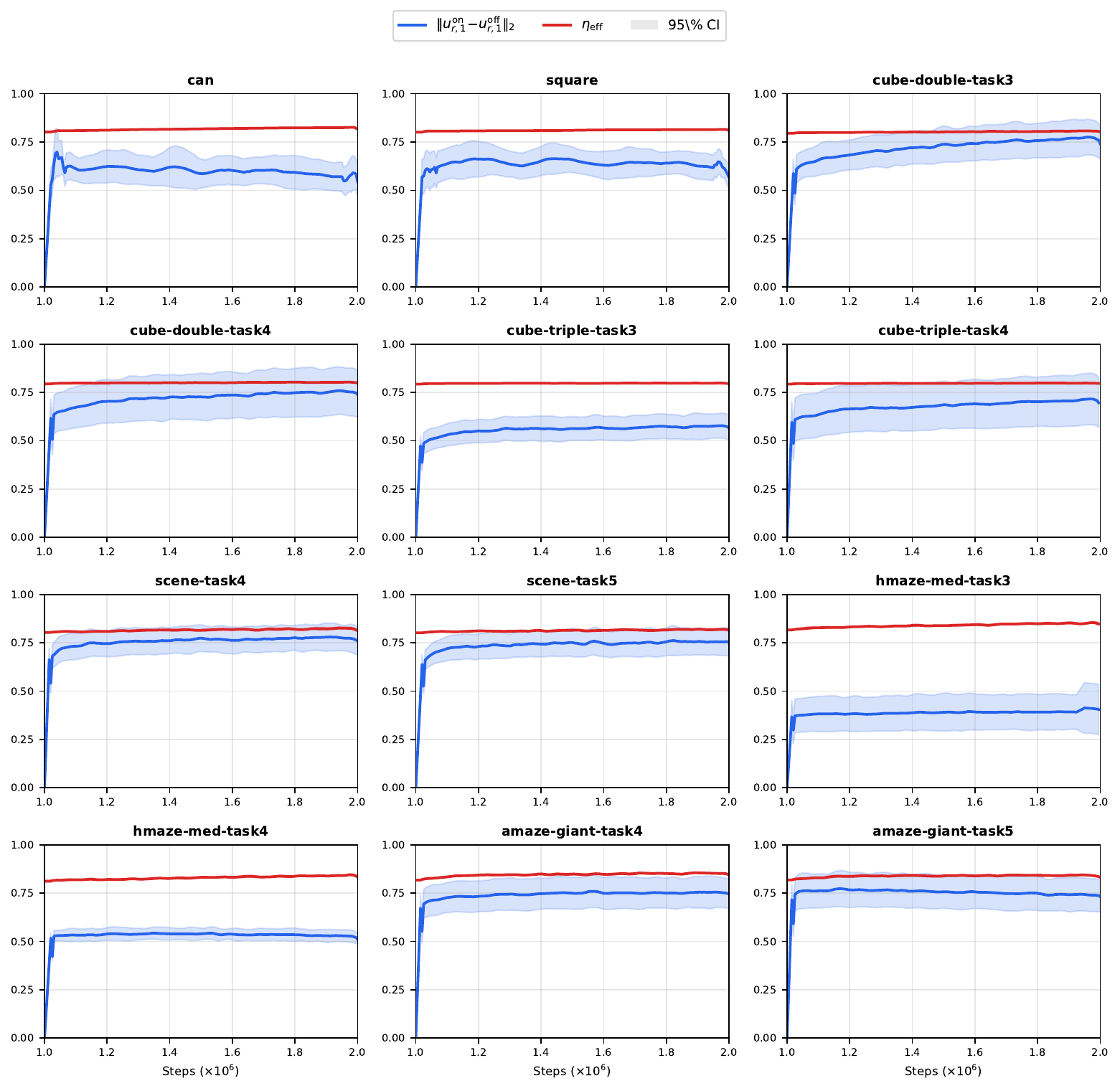}
\caption{%
Trust-region convergence for \nameshorttr{} ($\beta{=}0.3$) across all 12 environments. Blue: action displacement $\|u^{\text{on}}_{r,1} - u^{\text{off}}_{r,1}\|_2$. Red dashed: adaptive trust-region radius $\eta_{\mathrm{eff}}$. Orange dotted (right axis): implied Q-uncertainty $\tilde{\sigma}_Q = (\eta_{\mathrm{eff}}^{-1} - 1)/\beta$.}
\label{fig:kkt_validation_app}
\end{figure}

\newpage

\section{Speedup Analysis}
\label{app:speedup}

\looseness=-1
\Cref{fig:speedup} visualizes the per-environment convergence speedup of \nameshorttr{} over MVP during the online phase (1M--2M), complementing the discussion in~\cref{sec:experiments_sample_efficiency}. For each threshold $\xi \in {75\%, 85\%, 95\%, 100\%}$ of the shared convergence target, we plot the ratio $T_{\text{MVP}} / T_{\nameshorttr}$. \nameshorttr{} is faster than MVP on every environment at every threshold (all points above $1\times$), with average speedups of $2.8$--$3.2\times$. The full per-environment breakdown including both full-training and online-only phases is provided in~\cref{tab:time-to-threshold-combined}.
\begin{figure}[t]
\centering
\includegraphics[width=0.9\linewidth]{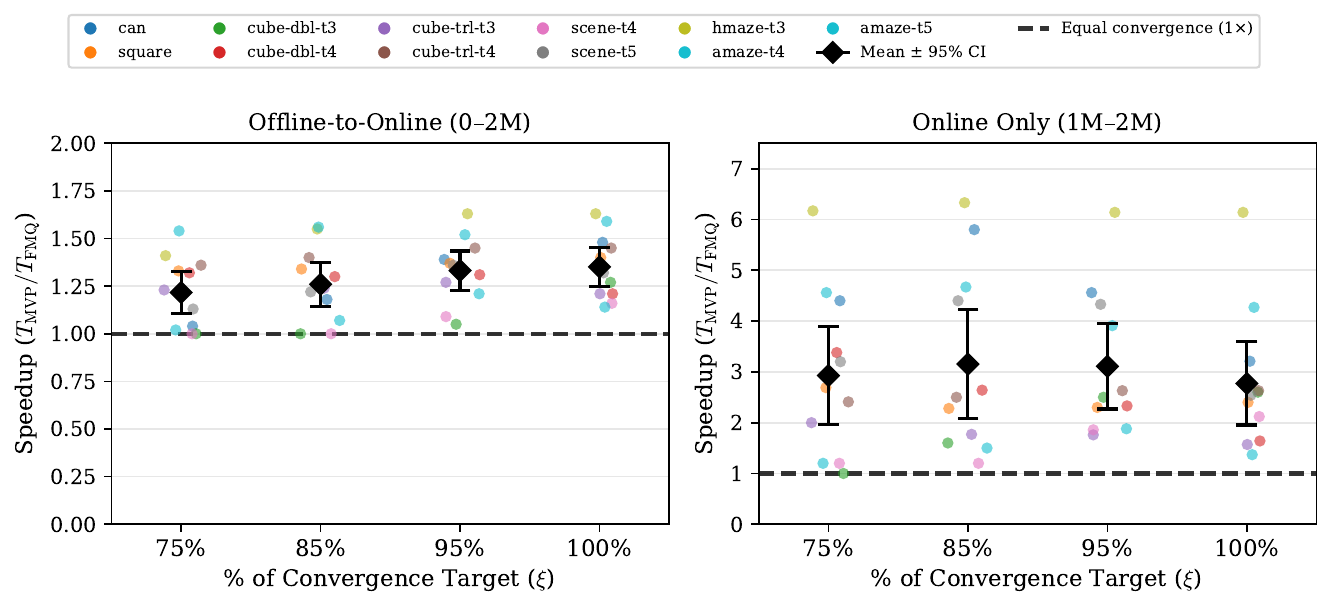}
\vspace{-15pt}
\caption{\looseness=-1 Convergence speedup of \nameshorttr{} over MVP ($T_{\text{MVP}} / T_{\nameshorttr}$) during online phase (1M--2M steps). Each dot represents one environment; black diamonds show the mean with 95\% CI. The dashed line marks equal convergence speed ($1\times$).}
\label{fig:speedup}
\end{figure}

\begin{table*}[t]
\centering
\tiny
\setlength{\tabcolsep}{4pt}
\caption{Speedup of \nameshorttr{} over MVP ($T_{\text{MVP}} / T_{\nameshorttr}$) measuring time to reach a fraction of the convergence target $\xi$ (per seed, averaged over 5 seeds). \textbf{Left}: full training (0--2M). \textbf{Right}: online phase only (1M--2M). Values $> 1$ indicate \nameshorttr{} is faster.}
\label{tab:time-to-threshold-combined}
\resizebox{1\linewidth}{!}{
\begin{tabular}{l c c c c c c c c}
\toprule
 & \multicolumn{4}{c}{\textbf{Full (0--2M)}} & \multicolumn{4}{c}{\textbf{Online (1M--2M)}} \\
\cmidrule(lr){2-5} \cmidrule(lr){6-9}
\textbf{Environment} & 75\% & 85\% & 95\% & 100\% & 75\% & 85\% & 95\% & 100\% \\
\midrule
\texttt{can} & $1.04$ & $1.18$ & $1.39$ & $1.48$ & $4.40$ & $5.80$ & $4.56$ & $3.21$ \\
\texttt{square} & $1.33$ & $1.34$ & $1.37$ & $1.40$ & $2.69$ & $2.28$ & $2.30$ & $2.40$ \\
\texttt{cube-double-task3} & $1.00$ & $1.00$ & $1.05$ & $1.27$ & $1.00$ & $1.60$ & $2.50$ & $2.60$ \\
\texttt{cube-double-task4} & $1.32$ & $1.30$ & $1.31$ & $1.21$ & $3.38$ & $2.64$ & $2.33$ & $1.64$ \\
\texttt{cube-triple-task3} & $1.23$ & $1.24$ & $1.27$ & $1.21$ & $2.00$ & $1.77$ & $1.76$ & $1.57$ \\
\texttt{cube-triple-task4} & $1.36$ & $1.40$ & $1.45$ & $1.45$ & $2.41$ & $2.50$ & $2.63$ & $2.63$ \\
\texttt{scene-task4} & $1.00$ & $1.00$ & $1.09$ & $1.16$ & $1.20$ & $1.20$ & $1.86$ & $2.12$ \\
\texttt{scene-task5} & $1.13$ & $1.22$ & $1.36$ & $1.32$ & $3.20$ & $4.40$ & $4.33$ & $2.54$ \\
\texttt{hmaze-med-task3} & $1.41$ & $1.55$ & $1.63$ & $1.63$ & $6.17$ & $6.33$ & $6.14$ & $6.14$ \\
\texttt{amaze-giant-task4} & $1.54$ & $1.56$ & $1.52$ & $1.59$ & $4.56$ & $4.67$ & $3.91$ & $4.27$ \\
\texttt{amaze-giant-task5} & $1.02$ & $1.07$ & $1.21$ & $1.14$ & $1.20$ & $1.50$ & $1.88$ & $1.37$ \\
\midrule
\textbf{Average} & $\mathbf{1.22}$ & $\mathbf{1.26}$ & $\mathbf{1.33}$ & $\mathbf{1.35}$ & $\mathbf{2.93}$ & $\mathbf{3.15}$ & $\mathbf{3.11}$ & $\mathbf{2.77}$ \\
95\% CI & $[1.10,\,1.33]$ & $[1.14,\,1.37]$ & $[1.23,\,1.43]$ & $[1.25,\,1.45]$ & $[1.97,\,3.89]$ & $[2.08,\,4.23]$ & $[2.27,\,3.95]$ & $[1.95,\,3.59]$ \\
\bottomrule
\end{tabular}
}
\end{table*}

\newpage
\section{Inference-Time Beam Search}
\label{app:beam_search}

\looseness=-1
\Cref{tab:sweep_off_norec} provides the full per-environment breakdown of \nameshortinf{} applied to the trained \nameshorttr{} checkpoint, extending the aggregate IQM results reported in~\cref{tab:iqm_fe}. NFE $= M(1 + KB)$, where $M$ is the number of initial candidates, $K$ the number of renoising steps, and $B$ the number of completions per candidate; $K{=}0$ reduces to standard best-of-$M$. The per-environment results confirm that the gains from renoising ($K{=}1$) are consistent across task domains---manipulation, multi-object rearrangement, and locomotion---with the most notable improvements on the harder maze tasks (\texttt{hm3}: $0.59 \to 0.72$, \texttt{hm4}: $0.07 \to 0.11$).

\begin{table*}[t]
\centering
\tiny
\setlength{\tabcolsep}{3pt}
\caption{\nameshortinf{} on \nameshorttr{}. SNR$=1.5$, $\eta=0.3$. Columns grouped by $K$; sub-columns $\{B, M\}$. Success rate (mean $\pm$ std, 5 seeds, 50 eps). Best per row in $\mathbf{bold}$.}
\label{tab:sweep_off_norec}
\begin{tabular}{lcccccccc}
\toprule
\textbf{Environment} & \multicolumn{1}{c}{$K=0$} & \multicolumn{4}{c}{$K=1$} & \multicolumn{2}{c}{$K=2$} \\
\cmidrule(lr){2-2} \cmidrule(lr){3-6} \cmidrule(lr){7-8}
 & $\{1, 32\}$ & $\{1, 16\}$ & $\{2, 8\}$ & $\{4, 4\}$ & $\{4, 16\}$ & $\{1, 16\}$ & $\{4, 4\}$ \\
\midrule
\texttt{can} & $0.96 \pm 0.04$ & $0.97 \pm 0.02$ & $0.96 \pm 0.04$ & $0.97 \pm 0.03$ & $0.94 \pm 0.04$ & $0.95 \pm 0.03$ & $\mathbf{0.98 \pm 0.03}$ \\
\texttt{square} & $0.94 \pm 0.02$ & $0.94 \pm 0.03$ & $\mathbf{0.96 \pm 0.04}$ & $0.95 \pm 0.04$ & $\mathbf{0.96 \pm 0.02}$ & $0.94 \pm 0.04$ & $0.94 \pm 0.03$ \\
\texttt{cdp3} & $\mathbf{1.00 \pm 0.00}$ & $0.99 \pm 0.01$ & $\mathbf{1.00 \pm 0.00}$ & $\mathbf{1.00 \pm 0.00}$ & $\mathbf{1.00 \pm 0.00}$ & $\mathbf{1.00 \pm 0.00}$ & $\mathbf{1.00 \pm 0.00}$ \\
\texttt{cdp4} & $0.98 \pm 0.02$ & $0.99 \pm 0.03$ & $0.99 \pm 0.01$ & $\mathbf{1.00 \pm 0.00}$ & $0.98 \pm 0.02$ & $0.99 \pm 0.03$ & $0.99 \pm 0.01$ \\
\texttt{ctrp3} & $0.78 \pm 0.10$ & $0.82 \pm 0.10$ & $0.78 \pm 0.06$ & $\mathbf{0.84 \pm 0.04}$ & $0.83 \pm 0.07$ & $\mathbf{0.84 \pm 0.08}$ & $0.82 \pm 0.08$ \\
\texttt{ctrp4} & $\mathbf{0.88 \pm 0.07}$ & $0.84 \pm 0.06$ & $\mathbf{0.88 \pm 0.06}$ & $0.87 \pm 0.05$ & $0.82 \pm 0.09$ & $0.82 \pm 0.04$ & $0.84 \pm 0.06$ \\
\texttt{sc4} & $\mathbf{1.00 \pm 0.00}$ & $\mathbf{1.00 \pm 0.00}$ & $0.99 \pm 0.01$ & $0.99 \pm 0.01$ & $\mathbf{1.00 \pm 0.00}$ & $0.99 \pm 0.01$ & $\mathbf{1.00 \pm 0.00}$ \\
\texttt{sc5} & $0.98 \pm 0.02$ & $\mathbf{1.00 \pm 0.00}$ & $0.99 \pm 0.01$ & $\mathbf{1.00 \pm 0.00}$ & $0.99 \pm 0.01$ & $\mathbf{1.00 \pm 0.00}$ & $0.98 \pm 0.02$ \\
\texttt{hm3} & $0.69 \pm 0.04$ & $0.70 \pm 0.07$ & $0.63 \pm 0.04$ & $0.58 \pm 0.07$ & $\mathbf{0.72 \pm 0.11}$ & $0.68 \pm 0.10$ & $0.70 \pm 0.07$ \\
\texttt{hm4} & $0.06 \pm 0.03$ & $0.07 \pm 0.04$ & $0.10 \pm 0.04$ & $0.06 \pm 0.03$ & $\mathbf{0.11 \pm 0.04}$ & $0.10 \pm 0.05$ & $0.10 \pm 0.03$ \\
\texttt{ag4} & $0.80 \pm 0.06$ & $0.78 \pm 0.06$ & $\mathbf{0.86 \pm 0.10}$ & $0.77 \pm 0.03$ & $0.82 \pm 0.04$ & $0.75 \pm 0.05$ & $0.79 \pm 0.03$ \\
\texttt{ag5} & $\mathbf{0.92 \pm 0.04}$ & $\mathbf{0.92 \pm 0.04}$ & $0.90 \pm 0.05$ & $\mathbf{0.92 \pm 0.05}$ & $0.90 \pm 0.09$ & $0.89 \pm 0.03$ & $0.90 \pm 0.02$ \\
\midrule
IQM & $0.91\;[0.89, 0.93]$ & $0.92\;[0.90,0.93]$ & $\mathbf{0.93\;[0.91,0.95]}$ & $\mathbf{0.93\;[0.91, 0.94]}$ & $0.92\;[0.90,0.93]$ & $0.90\;[0.89,0.92]$ & $0.91\;[0.90,0.93]$ \\
\bottomrule
\end{tabular}
\end{table*}

\begin{table}[h]
\centering
\small
\setlength{\tabcolsep}{5pt}
\caption{Hyperparameters shared across all methods.}
\label{tab:hyperparams-shared}
\begin{tabular}{l l}
\toprule
\textbf{Parameter} & \textbf{Value} \\
\midrule
Optimizer & Adam \\
Learning rate & $3 \!\times\! 10^{-4}$ \\
Batch size & $256$ \\
Discount ($\gamma$) & $0.99$ \\
Target update ($\tau$) & $5 \!\times\! 10^{-3}$ \\
UTD ratio & $1$ \\
Offline / online steps & $1\text{M}$ / $1\text{M}$ \\
Replay buffer & $2\text{M}$ \\
\midrule
Policy network & MLP, $4 \!\times\! 512$, GELU \\
Critic network & MLP, $4 \!\times\! 512$, GELU, LayerNorm \\
Critic ensemble & $2$ (double Q, mean agg.) \\
Fourier embedding & $64$ dim per time axis \\
Chunking horizon ($H$) & $5$ \\
\midrule
Eval interval / episodes & $100\text{K}$ / $50$ \\
\bottomrule
\end{tabular}
\end{table}

\begin{table}[t]
\centering
\small
\setlength{\tabcolsep}{4pt}
\caption{Inference procedures. NFE = network forward evaluations per action.}
\label{tab:inference}
\begin{tabular}{l l c c c}
\toprule
\textbf{Method} & \textbf{Action selection} & \textbf{Steps} & $N$ & \textbf{NFE} \\
\midrule
QC & Best-of-$N$ (Euler) & $10$ & $32$ & $320$ \\
MVP & Best-of-$N$ (flow map) & $1$ & $32$ & $32$ \\
MVP + \nameshortinf (ours) & \nameshortinf & $K$ & $M$ & $K \!\cdot\! M$ \\
\nameshorttr (ours) & Best-of-$N$ (flow map) & $1$ & $32$ & $32$ \\
\nameshorttr + \nameshortinf (ours) & \nameshortinf & $K$ & $M$ & $K \!\cdot\! M$ \\
\bottomrule
\end{tabular}
\end{table}

\section{Implementation Details}
\label{app:implementation}

\looseness=-1
All methods share the same network architecture, critic algorithm, and training pipeline to ensure a controlled comparison. The policy is parameterized as a time-conditioned velocity field $u_\theta$: a 4-layer MLP with 512 hidden units and GELU activations. Scalar flow times are lifted to 64-dimensional sinusoidal Fourier embeddings before concatenation with the observation and noisy action. The critic follows the clipped double Q-learning framework~\citep{fujimoto2018doubleqlearning}: an ensemble of two Q-networks with the same MLP architecture (with LayerNorm) trained against a shared Bellman target using Polyak-averaged target networks ($\tau{=}0.005$). All methods use action chunking ($H{=}5$), a replay buffer of $2\text{M}$ transitions, and are trained for $1\text{M}$ offline followed by $1\text{M}$ online steps with UTD ratio 1, Adam ($\mathrm{lr}{=}3{\times}10^{-4}$), and batch size 256. Full shared hyperparameters are in~\cref{tab:hyperparams-shared}.

\looseness=-1
QC~\citep{li2025reinforcement} trains a standard CFM velocity field $v_\theta(a_t, t \mid s)$ with the straight-line interpolation objective. At inference, the ODE is integrated from $t{=}0$ to $t{=}1$ with 10 Euler steps, producing 32 candidates scored by the critic (best-of-$N$, 320 NFE total).MVP~\citep{zhan2026mean} replaces multi-step Euler integration with a single-step flow map ($K{=}1$) that directly predicts the average velocity $u_{r,t}(a_r \mid s)$ over $[r, t]$. The network takes as input $[\mathbf{s}, \mathbf{x}_r, \text{Fourier}(r), \text{Fourier}(t), \text{Fourier}(t_c), \mathbf{a}_c]$ where $(t_c, \mathbf{a}_c)$ form a conditioning axis for stochastic action generation. Training uses a progressive curriculum: diagonal-only CFM ($r{=}t$) for $5\text{K}$ steps, then the interval $[r, t]$ is annealed to the full range over $50\text{K}$ steps, and the conditioning axis is introduced after $10\text{K}$ steps with $P(t_c{=}0){=}0.5$. At inference, a single forward pass generates each candidate and best-of-$32$ selection is applied (32 NFE). \nameshort{} shares the same offline pretraining as MVP. During the online phase, it switches to the trust-region Q-gradient objective described in~\cref{sec:online_adaptation}: the offline flow map is frozen as the reference $u^{\text{off}}_{r,1}$, the Q-gradient is $\ell_2$-normalized, and the trust-region radius $\eta_{\mathrm{eff}}$ adapts per sample via Q-ensemble disagreement ($\beta{=}0.3$, cf.~\cref{eq:eta_eff}). At inference, \nameshort{} uses the same best-of-$32$ flow map selection as MVP (32 NFE). \nameshortinf{} applies the $Q$-guided beam search of~\cref{sec:inference_steering} at inference time without additional training. Starting from a trained flow map, actions are diversified via SNR-based renoising and refined over $K$ beam steps using $M$ candidates, with the trust-region projection applied at each iteration. Steering strength is controlled by $\lambda$ and actions are clipped via a straight-through estimator. Cost: $M(1 + KB)$ NFE per action. Inference procedures and computational costs are summarized in~\cref{tab:inference}. All training experiments were run on NVIDIA A100-SXM4-80GB GPUs. A full training run takes approximately 4 hours per seed. Inference-time steering evaluations were conducted on NVIDIA RTX 6000 Ada Generation GPUs (48\,GB VRAM).

\end{document}